\documentclass[sigconf]{}

\begin{CCSXML}
<ccs2012>
   <concept>
       <concept_id>10003752.10010070.10010071.10010074</concept_id>
       <concept_desc>Theory of computation~Unsupervised learning and clustering</concept_desc>
       <concept_significance>500</concept_significance>
       </concept>
   <concept>
       <concept_id>10010405</concept_id>
       <concept_desc>Applied computing</concept_desc>
       <concept_significance>500</concept_significance>
       </concept>
 </ccs2012>
\end{CCSXML}

\ccsdesc[500]{Theory of computation~Unsupervised learning and clustering}
\ccsdesc[500]{Applied computing}

\keywords{datasets, neural networks, gaze detection, text tagging}


\AtBeginDocument{%
  \providecommand\BibTeX{{%
    \normalfont B\kern-0.5em{\scshape i\kern-0.25em b}\kern-0.8em\TeX}}}


\copyrightyear{2022}
\acmYear{2022}
\setcopyright{acmlicensed}\acmConference[KDD '22]{Proceedings of the 28th ACM SIGKDD Conference on Knowledge Discovery and Data Mining}{August 14--18, 2022}{Washington, DC, USA}
\acmBooktitle{Proceedings of the 28th ACM SIGKDD Conference on Knowledge Discovery and Data Mining (KDD '22), August 14--18, 2022, Washington, DC, USA}
\acmPrice{15.00}
\acmDOI{10.1145/3534678.3539451}
\acmISBN{978-1-4503-9385-0/22/08}


\usepackage{amsthm}
\usepackage{amsmath}
\usepackage{xcolor}
\usepackage{float}
\usepackage{accents}
\usepackage{algorithm}
\usepackage{algorithmic}
\usepackage{subcaption}
\usepackage{thmtools}
\usepackage{thm-restate}
\usepackage{tikz}
\usetikzlibrary{automata}
\usepackage{diagbox}
\usepackage{thm-restate}
\usepackage{enumitem}

\DeclareMathOperator{\E}{\mathbb{E}}
\newcommand{\floor}[1]{\left\lfloor #1 \right\rfloor}
\newcommand{\ceil}[1]{\left\lceil #1 \right\rceil}

\newtheorem{theorem}{Theorem}

\newtheorem{lemma}{Lemma}

\DeclareMathOperator{\point}{\mathnormal{j}}
\DeclareMathOperator{\Points}{\mathcal{C}}
\DeclareMathOperator{\labs}{\mathcal{L}}
\DeclareMathOperator{\pcolor}{\mathnormal{h}}
\DeclareMathOperator{\Colors}{\mathcal{H}}
\DeclareMathOperator{\colSize}{\mathnormal{R}}
\DeclareMathOperator{\colPoints}{\mathcal{C}^{\mathnormal{h}}}

\DeclareMathOperator{\nfair}{\mathnormal{n}_{\text{atomic}}}
\newcommand{\nfaircol}[1]{\mathnormal{n}^{h_{#1}}_{\text{atomic}}}
\newcommand{\drop}[1]{\mathnormal{drop(#1)}} 
\newcommand{\la}[1]{\mathnormal{\ell}(#1)}
\newcommand{\lanone}{\mathnormal{\ell}}
\newcommand{\lb}[1]{\mathnormal{(LB)}_{#1}}
\newcommand{\ub}[1]{\mathnormal{(UB)}_{#1}}

\newcommand{\cl}[1]{\mathnormal{(CL)}_{#1}}
\newcommand{\cu}[1]{\mathnormal{(CU)}_{#1}}

\DeclareMathOperator{\LCAL}{\mathrm{LCAL}}
\DeclareMathOperator{\LCUL}{\mathrm{LCUL}}
\DeclareMathOperator{\CLP}{\mathrm{CLP}}

\DeclareMathOperator{\pil}{\mathnormal{p^i_L}}
\DeclareMathOperator{\pild}{\mathnormal{P^i_L}}

\DeclareMathOperator{\POF}{\mathrm{PoF}}



\newcommand{\adult}{\textbf{Adult}}

\newcommand{\credit}{\textbf{CreditCard}}
\newcommand{\cens}{\textbf{Census1990}}

\newcommand{\NCRA}{\textbf{NCRA}}
\newcommand{\FC}{\textbf{FC}}
\newcommand{\LFC}{\textbf{LFC}}


\begin{document}

\title{Fair Labeled Clustering}
\keywords{Algorithmic Fairness, Unsupervised Learning, Clustering}

\author{Seyed A. Esmaeili}
\affiliation{%
  \institution{University of Maryland, College Park} 
  \country{Maryland, USA \\esmaeili@cs.umd.edu}
}

\author{Sharmila Duppala}
\affiliation{%
  \institution{University of Maryland, College Park} 
  \country{Maryland, USA \\sduppala@cs.umd.edu}
}

\author{John P. Dickerson}
\affiliation{%
  \institution{University of Maryland, College Park}
  \country{Maryland, USA \\johnd@umd.edu}
}

\author{Brian Brubach}
\affiliation{%
  \institution{Wellesley College}
  \country{Massachusetts, USA \\bb100@wellesley.edu}
}

\renewcommand{\shortauthors}{Seyed A. Esmaeili et al.}


\begin{abstract}
The widespread use of machine learning algorithms in settings that directly affect human lives has instigated significant interest in designing variants of these algorithms that are provably fair. Recent work in this direction has produced numerous algorithms for the fundamental problem of clustering under many different notions of fairness. Perhaps the most common family of notions currently studied is group fairness, in which proportional group representation is ensured in every cluster. We extend this direction by considering the downstream application of clustering and how group fairness should be ensured for such a setting. Specifically, we consider a common setting in which a decision-maker runs a clustering algorithm, inspects the center of each cluster, and decides an appropriate outcome (label) for its corresponding cluster. In hiring for example, there could be two outcomes, positive (hire) or negative (reject), and each cluster would be assigned one of these two outcomes. To ensure group fairness in such a setting, we would desire proportional group representation in every label but not necessarily in every cluster as is done in group fair clustering. We provide algorithms for such problems and show that in contrast to their NP-hard counterparts in group fair clustering, they permit efficient solutions. We also consider a well-motivated alternative setting where the decision-maker is free to assign labels to the clusters regardless of the centers’ positions in the metric space. We show that this setting exhibits interesting transitions from computationally hard to easy according to additional constraints on the problem. Moreover, when the constraint parameters take on natural values we show a randomized algorithm for this setting that always achieves an optimal clustering and satisfies the fairness constraints in expectation. Finally, we run experiments on real world datasets that validate the effectiveness of our algorithms.
\end{abstract}



\maketitle

\section{Introduction}
Machine learning applications have seen widespread use across diverse areas from criminal justice to hiring to healthcare. These applications significantly affect human lives and risk contributing to discrimination~\cite{angwin2016machine,obermeyer2019dissecting}. As a result, research has been directed toward the creation of fair machine learning algorithms~\cite{dwork2012fairness}. Much existing work has focused on the supervised setting. However, significant attention has recently been given to clustering---a fundamental problem in unsupervised learning and operations research. 
While many important notions of fair clustering have been proposed, the most relevant to our work is group (demographic) fairness \cite{chierichetti2017fair,ahmadian2019clustering,bercea2018cost,bera2019fair,backurs2019scalable,huang2019coresets,esmaeili2020probabilistic,kleindessner2019fair,davidson2020making}. In many of those works, fairness is maintained at the cluster level by imposing constraints on the proportions of groups present in each cluster. For example, we may require the racial demographics of each cluster to be close to the dataset as a whole (demographic/statistical parity) or that no group is over-represented in any cluster. 

While constraining the demographics of each cluster is appropriate in some settings, it may be unnecessary or impractical in others. 
In decision making applications, each cluster eventually has a specific label (outcome) associated with it which may be more positive or negative than others. If the same label is applied to multiple clusters, we may only wish to bound the demographics of points associated with a given label as opposed to bounding the demographics of each cluster. 

To be more concrete, consider the application of clustering for market segmentation in order to generate better targeted advertising \cite{chen2012data,aggarwal2004method,tan2018introduction,han2011data}. In this setting, 
we select or engineer features which are informative for targeted advertising and apply clustering (e.g., $k$-means) to the dataset. Then, we analyze the resulting centers (prototypical examples) and make decisions for targeted advertising in the form of recommending specific products or offering certain deals. These products or deals may have different levels of quality, i.e., we may assign labels such as: \emph{mediocre}, \emph{good}, or \emph{excellent} to each cluster based on the quality of its advertisements. For the clusters of a given label (treated as one), it is possible that a certain demographic would be under-represented in the \emph{excellent} label or that another could be over-represented in the \emph{mediocre} label. In fact, the reports in \cite{probublica_facebook,speicher2018potential,datta2018discrimination} indicate that targeted advertising may under-represent certain demographics for some advertisements. An algorithm that ensures each group is represented proportionally in each label could remedy this issue. While applying group fair clustering algorithms would also ensure demographic representation in the clusters and thus the labels, it could come at the price of a higher deformation in the clustering since points would have to be routed to possibly faraway centers just to satisfy the representation proportions. On the other hand, ensuring fair representation across the labels, but not necessarily the centers is less restrictive and likely to cause less deformation to the clustering. 

Another similar example is clustering for job screening~\cite{deepak2020whither} in which we have a dataset of candidates,\footnote{In some countries, such as India, the number of candidates can be in the millions for government jobs: \url{https://www.bbc.com/news/world-asia-india-43551719}.} and each candidate is represented as a point in a metric space. Clustering could be applied over this set to obtain $k$ many clusters. Then, the center of each cluster is given a more costly examination (e.g., a human carefully screening a job application). Accordingly, the centers would be assigned labels from the set: \emph{hire}, \emph{short-list}, \emph{scrutinize further}, or \emph{reject}.  Naturally, more than one cluster could be assigned the same label. Clearly, the greater concern here is demographic parity across the labels, but not necessarily the individual clusters. Thus, group fair clustering would yield unnecessarily sub-optimal solutions.

While in the above examples the label of the center was decided according to its position in the metric space. One can envision applications in Operations Research where the label assignment of the center is not dependent on its position \cite{shmoys2004facility,xu2008approximation}. Rather, we would have a set of centers (facilities) of different service types (or quality) and we would have a budget for each  service type. Further, to ensure group fairness we would satisfy the demographic representation over the service types offered. In this setting, we would have to choose the labels so as to minimize the clustering cost subject to further constraints such as budget and fair demographic representation.

The above examples illustrate the need for a group fairness definition at the label level when clustering is applied in decision-making settings or when the different centers (facilities) provide different types of services. 
In addition to being sufficient, evaluating fairness at the label level rather than cluster level can also be necessary. When the metric space is correlated with group membership it may be costly, counterproductive, or impossible to get meaningful clusters that each preserve the demographics of the dataset. For example, if the metric space is geographic as in many facility location problems, a person's location can be correlated with their racial group membership due to housing segregation. The same is true in machine learning when common features like location redundantly encode sensitive features such as race. 
In this case, the more strict approach of group fairness in each cluster could cause a large enough degradation in clustering quality that the entity in charge chooses a classical ``unfair'' clustering algorithm instead. 
In legal terms, this unfair clustering approach may exhibit \emph{disparate impact}---members of a protected class may be adversely affected without provable intent on the part of the algorithm. However, disparate impact is allowed if the unfair clustering can be justified by \emph{business necessity} (e.g., the fair clustering alternative is too costly)\cite{CivilRights1991}.


Thus, our work can be seen as a less stringent, less costly, and fundamentally different approach which still satisfies some similar fairness criteria to existing group fair clustering formulations. In addition, the decision-maker may not be concerned with the demographic representation in all labels, but rather only a specific set of label(s) such as \emph{hire} and \emph{short-list}. It may also be desired to enforce different lower and upper representation bounds for different labels. 

\subsection{Our Contributions}
\label{sec:ourcont}
We introduce the problem of fairness in labeled clustering in which group fairness is ensured within the labels as opposed to each cluster. Specifically, we are given a set of centers found by a clustering algorithm, then having found the centers, we have to satisfy group fairness over the labels. We consider two settings: (1)\textbf{ labeled clustering with assigned labels ($\LCAL$)} where the center labels are decided  based on their position as would be expected in machine learning applications and (2)\textbf{ labeled clustering with unassigned labels ($\LCUL$)} where we are free to select the center labels subject to some constraints. We note that throughout we consider the set of centers to be given and fixed (although in the unassigned setting their labels are unknown), therefore the problem is essentially a routing (assignment) problem where points are assigned to centers rather than a clustering problem. We however, refer to it as clustering since we minimize the clustering cost throughout and since our motivation is clustering based. Moreover, many of the application cases of the assigned labels setting would not alter the centers as that would not change the assigned labels which are given manually through further inspection \cite{deepak2020whither,chen2012data,tan2018introduction} or in the case of the unassigned labels we would have a fixed set of centers. Further, the work of \cite{davidson2020making} in fair clustering follows a similar setting where the centers are fixed. 

For the $\LCAL$ (assigned labels) setting, we show that if the number of labels is constant, then we can obtain an optimal clustering cost subject to satisfying fairness within labels in polynomial time. This is in contrast to the equivalent \emph{fair assignment} problem in fair clustering which is NP-hard\cite{bercea2018cost,esmaeili2021fair}.\footnote{In this equivalent problem, the set of centers is given. We seek an assignment of points to these centers that minimizes a clustering objective and bounds the group proportions assigned to each center.} Furthermore, for the important special case of two labels, we obtain a faster algorithm with running time $O(n(\log{n}+k))$. 

For the $\LCUL$ (unassigned labels) setting, we give a detailed characterization of the hardness under different constraints and show that the problem could be NP-hard or solvable in polynomial time. Furthermore, for a natural specific form of constraints we show a randomized algorithm that always achieves an optimal clustering and satisfies the fairness constraints in expectation.

We conduct experiments on real world datasets that show the effectiveness of our algorithms. In particular, we show that our algorithms provide fairness at a lower cost than fair clustering and that they indeed scale to large datasets. We note that due to the space limit, some proofs are relegated to the appendix.

\section{Related Work}

Much of the investigation into fairness in machine learning and automated systems was sparked by the seminal work of~\cite{dwork2012fairness}. That work and others~\cite{zemel2013reps,Feldman2015Certifying} respond to the reality that points which should receive similar classifications, but belong to different demographic groups may not be near each other in the feature space. Our approach accounts for this phenomenon as well by allowing points from different groups to be distant in the metric space and assigned to different clusters, but receive the same label.

The most closely related work in the clustering space addresses group (demographic) fairness among the members of each cluster~\cite{chierichetti2017fair,backurs2019scalable,bercea2018cost,bera2019fair,ahmadian2019clustering,huang2019coresets,esmaeili2020probabilistic, davidson2020making,esmaeili2021fair}. However, as noted earlier, these approaches can diverge quite a bit from the problem we consider and are not directly comparable. Some work also considers the less related fair data summarization problem of bounding group proportions among the set of centers/exemplars~\cite{kleindessner2019fair}. In addition, several other notions of fair clustering and summarization exist to capture the diverse settings and objectives for which fairness is desirable. These include service guarantees bounding the distance of points to centers~\cite{harris2018approximation}, preserving nearby pairs or communities of points in the metric space~\cite{brubach2020pairwise}, equitable
group representation \cite{abbasi2020fair,ghadiri2021socially}, and fair candidate selection\cite{bei2020candidate}.

In particular, the setting of \cite{davidson2020making} is very similar to ours in that the set of centers is fixed, and the problem amounts to routing points to centers so as to minimize the clustering cost function. However, unlike our work, the constraint is to satisfy conventional group fairness in the clusters; whereas in our setting, we are concerned with  group fairness only within the labels. 





\section{Preliminaries and Problem Formulation} 
We are given a complete metric graph with a set of vertices (points) $\Points$ where $|\Points|=n$. Further, each point has a color assigned to it according to the function $\chi: \Points \rightarrow \Colors$ where $\Colors$ is the set of possible colors, with cardinality $\colSize$, i.e. $|\Colors|=\colSize$. We refer to the set of points with color $\pcolor \in \Colors$ by $\colPoints$. We further have a distance function $d:\Points \times \Points \rightarrow \mathbb{R}_{\ge 0}$ which defines a metric. We are given a set $S$ of centers that have been selected, $S$ contains at most $k$ many centers, i.e. $|S|\leq k$. Furthermore, we have the set of labels $\labs$ where $\labs$ has a total of $m$ many possible labels, i.e. $|\labs|=m$. The function $\lanone:S \rightarrow \labs$ assigns centers to labels. Our problem always involves finding an assignment from points to centers, $\phi: \Points \rightarrow S$ such that it is the optimal solution to a constrained optimization problem where the objective is a clustering objective. Specifically, we always have to minimize the objectives:$\Big( \sum_{\point \in \Points} d^p(\point,\phi(\point)) \Big)^{1/p}$, where $p=\infty,1,$ and $2$ for the $k$-center, $k$-median, and $k$-means objectives, respectively. We note that for the $k$-center with $p=\infty$, the objective reduces to a simpler form $\Big( \sum_{\point \in \Points} d^p(\point,\phi(\point)) \Big)^{1/p} = \max_{\point \in \Points} d(\point,\phi(\point))$ which is the maximum distance between a point $\point$ and its assigned center $\phi(\point)$. We consider the number of colors $\colSize$ to be a constant throughout. This is justified by the fact that in most applications demographic groups tend to be limited in number. 

As mentioned earlier, we have two settings and accordingly two variants of this optimization: (1) labeled clustering with assigned labels ($\LCAL$) where the centers have already been assigned labels and (2) labeled clustering with unassigned labels ($\LCUL$) where the centers have not been assigned any labels and can be assigned any arbitrary labels from the set $\labs$ subject to (possible) additional constraints.  

We pay special attention to the two label case where $\labs=\{P,N\}$ with $P$ being a positive outcome label and $N$ being a negative outcome label, although many of our results can be extended to the general case where $|\labs|=m>2$.


\subsection{Labeled Clustering with Assigned Labels ($\LCAL$):}
In this problem the labels of the centers have been assigned, i.e. the function $\lanone$ is fully known and fixed. We look for an assignment $\phi$ which is the optimal solution to the following problem: 
{\small
\begin{subequations}\label{lcal_opt}
 \begin{equation}
    \label{LC-11}
    \min_{\phi}  \Big( \sum_{\point \in \Points} d^p(\point,\phi(\point)) \Big)^{1/p}  \\
 \end{equation}
 \begin{align}
    \label{lcal_1}
     & \forall L \in \labs, \forall h \in \Colors: l^{L}_{\pcolor}\text{\hspace{-2mm}} \sum_{\substack{i\in S\\ \la{i}=L}}\text{\hspace{-2mm}} |\Points_i|  \leq \text{\hspace{-3mm}}\sum_{\substack{i\in S\\ \la{i}=L}}\text{\hspace{-2mm}} |\Points^{\pcolor}_i| \leq u^{L}_{\pcolor} \text{\hspace{-2mm}}\sum_{\substack{i\in S\\ \la{i}=L}}\text{\hspace{-2mm}} |\Points_i|
 \end{align}
  \begin{align}
    \label{lcal_2}
     & \forall L \in \labs:  \lb{L} \leq \sum_{i\in S: \la{i}=L} |\Points_i| \leq \ub{L}
 \end{align}
\end{subequations}
}
\vspace{-0.5mm}
where $\Points_i$ refers to the points $\phi$ assigns to the center $i$, i.e. $\Points_i=\{ \point \in \Points| \phi(\point)=i \}$. $\Points^{\pcolor}_i=\Points_i \cap \colPoints$, i.e. the subset of $\Points_i$ with color $\pcolor$. $l^{L}_{\pcolor}$ and $u^{L}_{\pcolor}$ are lower and upper proportional bounds for color $\pcolor$. Clearly, $l^{L}_{\pcolor},u^{L}_{\pcolor} \in [0,1]$. Constraints (\ref{lcal_1}) are the proportionality (fairness) constraints that are to be satisfied in fair labeled clustering. Notice how we have a superscript $L$ in $l^{L}_{\pcolor}$ and $u^{L}_{\pcolor}$, this is to indicate that we may desire different proportional representations in different labels. For example, for the case of two labels $\labs=\{P,N\}$, we may not want to enforce proportional representation in the negative label so we set $l^{N}_{\pcolor}=0$ and $u^{N}_{\pcolor}=1$ but we may want to enforce lower representation bounds in the positive label and therefore set $l^{P}_{\pcolor}$ to some non-trivial value. Note that these constraints generalize those of fair clustering, in fact we can obtain the constraints of fair clustering by letting each center have its own label ($m=k$) and enforcing the proportional representation bounds to be the same throughout all labels. However, in our problem we focus on the case where the number of labels $m$ is constant since in most applications we expect a small number of labels (outcomes). In fact, a large number could cause a problem in terms of decision making and result interpretability. 

In constraints (\ref{lcal_2}), $\lb{L}$ and $\ub{L}$ are pre-set upper and lower bounds on the number of points assigned to a given label, clearly $\lb{L}, \ub{L} \in \{0,1,\dots,n\}$ . They are additional constraints we introduce to the problem that have not been previously considered in fair clustering. Our motivation comes from the fact that since positive or negative outcomes could be associated with different labels, it is reasonable to set an upper bound on the total number of points assigned to a positive label, since a positive assignment may incur a cost and there is a bound on the budget. Similarly, we may set a lower bound to avoid trivial solutions where most points are assigned to negative outcomes and no or very few agents enjoy the positive outcome. 


\subsection{Labeled Clustering with Unassigned Labels ($\LCUL$):}
In labeled clustering with unassigned labels $\LCUL$, the labels of the centers have not been assigned. As noted, this captures certain OR applications in which the label of a center is not related to its position in the metric space. 

Similar to the case with assigned labels $\LCAL$, we would also wish to minimize the clustering objective. In general we have the following optimization problem:
{\small
\begin{subequations}\label{lcul_opt}
 \begin{equation}
    \label{LC-11}
    \min_{\phi, \lanone}  \Big( \sum_{\point \in \Points} d^p(\point,\phi(\point)) \Big)^{1/p}  \\
 \end{equation}
 \vspace{-2mm}
 \begin{align}
    \label{lcul_1}
       & \forall L \text{\hspace{-1mm}}  \in \labs, \forall h \in \Colors: l^{L}_{\pcolor} \text{\hspace{-2mm}}  \sum_{\substack{i\in S\\ \la{i}=L}} \text{\hspace{-2mm}} |\Points_i|  \leq \text{\hspace{-3mm}}\sum_{\substack{i\in S\\ \la{i}=L}}\text{\hspace{-2mm}} |\Points^{\pcolor}_i| \leq u^{L}_{\pcolor} \text{\hspace{-2mm}} \sum_{\substack{i\in S\\ \la{i}=L}} \text{\hspace{-2mm}} |\Points_i|
 \end{align}
  \vspace{-2mm}
  \begin{align}
    \label{lcul_2}
     & \forall L \in \labs:  \lb{L} \leq \sum_{i\in S: \la{i}=L} |\Points_i| \leq \ub{L}
 \end{align}
  \vspace{-2mm}
   \begin{align}
    \label{lcul_3}
     & \forall L \in \labs: \cl{L}\leq |S^{L}| \leq \cu{L}
 \end{align}
\end{subequations}
}
\vspace{-0.5mm}
Note how in the above objective $\lanone$ has been added as an optimization variable unlike the objective in (\ref{lcal_opt}) for $\LCAL$. Further, we have added constraint (\ref{lcul_3}) where $S^{L}$ refers to the subset of centers that have been assigned label $L$ by the function $\lanone$, i.e. $S^L=\{i \in S|\la{i}=L\}$. This constraint simply lower bounds $S^L$ by $\cl{L}$ and upper bounds it by $\cu{L}$. This constraint models minimal service guarantees (lower bound) and budget (upper bound) guarantees. Clearly, $\cl{L},\cu{L} \in \{0,1,\dots,k\}$. Further, setting $\cl{L}=0$ and $\cu{L}=k  \ \forall L \in \labs$ allows any label to have any number of centers, effectively nullifying the constraint. We show in a subsequent section that forcing certain constraints on the problem can make it NP-hard and that relaxing some constraints would make the problem permit polynomial time solutions. 

\section{Algorithms and Theoretical Guarantees for $\LCAL$}\label{sec:lcal}
\subsection{$\LCAL$ is Polynomial Time Solvable:}\label{sec:lcal_all}
$LCAL$ is problem (\ref{lcal_opt}) where we have a collection of centers and we wish to minimize a clustering objective subject to proportionality constraints (\ref{lcal_1}) and possible constraints on the number of points each label is assigned (\ref{lcal_2}). Fair assignment\footnote{Fair assignment \cite{bercea2018cost,bera2019fair,esmaeili2020probabilistic} is a sub-problem solved in fair clustering to finally yield a full algorithm for fair clustering.} is a problem which has a very similar form to our problem; the centers have already been decided and we wish to satisfy the same proportionality constraints in every cluster, specifically the optimization problem is: 
{\small
\begin{subequations}\label{fa_opt}
 \begin{equation}
    \label{LC-11}
    \min_{\phi}  \Big( \sum_{\point \in \Points} d^p(\point,\phi(\point)) \Big)^{1/p}  \\
 \end{equation}
 \vspace{-4mm}
 \begin{align}
    \label{fa_1}
     & \forall i \in S, \forall h \in \Colors: l_{\pcolor}  |\Points_i|  \leq  |\Points^{\pcolor}_i| \leq u_{\pcolor} |\Points_i|
 \end{align}
\end{subequations}}
It may be thought that the above optimization is simpler than that of $LCAL$ (\ref{lcal_opt}), since all clusters have to satisfy the same proportionality bounds and there is no bound on the total number of points assigned to a any specific cluster. However, \cite{bercea2018cost,esmaeili2021fair} show that the problem is in fact NP-hard for all clustering objectives. We show in the theorem below that $LCAL$ can be solved in polynomial time for all clustering objectives.

\begin{restatable}{theorem}{thlcalpoly}\label{th_lcal_poly}
Labeled clustering with assigned labels $LCAL$ is solvable in polynomial time for the all clustering objectives ($k$-center, $k$-median, and $k$-means). 
\end{restatable}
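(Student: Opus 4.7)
The plan is to reduce $\LCAL$ to a polynomial-size enumeration followed by a standard transportation subproblem, exploiting the fact that both $m = |\labs|$ and $R = |\Colors|$ are constants.

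First, I would observe that constraints (\ref{lcal_1}) and (\ref{lcal_2}) depend only on the aggregate counts $n_{h,L} := \sum_{i \in S : \la{i} = L} |\Points^h_i|$, and not on how points are distributed among individual centers sharing a label. Since each $n_{h,L} \in \{0,1,\dots,n\}$ and there are only $m R$ such counts, every candidate matrix $(n_{h,L})_{h \in \Colors,\, L \in \labs}$ can be enumerated in $(n+1)^{mR}$ time, which is polynomial. For each candidate, verifying (\ref{lcal_1}) and (\ref{lcal_2}) takes $O(mR)$ time; discard the candidate if either is violated.

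Next, given a feasible count matrix, the residual optimization decouples into $R$ independent transportation problems. The key point is that within a label $L$ there are no constraints beyond the aggregate color counts --- in particular, no capacity constraint on any individual center. So once a point $\point$ is committed to some label-$L$ center, the optimal choice is the nearest such center, giving an effective cost $c(\point, L) := \min_{i \in S : \la{i} = L} d(\point, i)^p$. For each color $h$, I then solve the min-cost transportation instance that distributes the $|\colPoints|$ points of color $h$ among the $m$ labels, sending exactly $n_{h,L}$ of them to label $L$ and minimizing $\sum_{\point} c(\point, L_\point)$. This is polynomial-time solvable via classical min-cost flow. Summing across colors, taking the $p$-th root, and minimizing over all feasible enumerated matrices yields the optimal cost for the $k$-median and $k$-means variants. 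For the $k$-center objective ($p = \infty$), the same scheme applies with a binary search over the $O(n|S|)$ distinct pairwise distances: for a candidate radius $r$, restrict each point $\point$ to labels in $\{L : \exists\, i \in S,\ \la{i} = L,\ d(\point, i) \le r\}$, and run each color's transportation instance as a pure feasibility check. The smallest $r$ for which some enumeration admits a feasible assignment is the optimum radius.

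The main obstacle is really just the bookkeeping to confirm polynomiality --- the exponent $mR$ stays constant by assumption, so $(n+1)^{mR}$ remains polynomial, and the transportation subroutine contributes only a polynomial factor. There is no combinatorial subtlety analogous to the per-cluster proportionality that renders fair allocation NP-hard in \cite{bercea2018cost, esmaeili2021fair}; the per-label nature of our constraints is precisely what allows the problem to decouple across individual centers within each label, and that decoupling is the whole game.
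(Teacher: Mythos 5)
Your proposal is correct, and it shares the paper's two pillars---a polynomial enumeration of per-label count patterns followed by an integral min-cost flow---but the decomposition is genuinely different. The paper enumerates only the label totals $n_L$ (at most $n^{m-1}$ candidates) and encodes the color-proportionality constraints (\ref{lcal_1}) inside a single layered network (source $\to$ points $\to$ color-split centers $\to$ color-split labels $\to$ labels $\to$ sink), using demands $\ceil{l^{L}_{h}n_L}$ and capacities $\floor{u^{L}_{h}n_L}$ on the colored label vertices. You instead enumerate the full color-by-label matrix $(n_{h,L})$, which costs $(n+1)^{mR}$ candidates---still polynomial since $m$ and $R$ are constants---check (\ref{lcal_1}) and (\ref{lcal_2}) combinatorially up front, and then exploit the resulting decoupling: with exact quotas fixed, colors no longer interact, and your observation that a point committed to label $L$ should go to its nearest label-$L$ center (cost $c(\point,L)=\min_{i\in S:\la{i}=L} d^p(\point,i)$) collapses each color to a small point-to-label transportation instance. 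That nearest-center reduction is left implicit in the paper, whose network keeps all point-to-center edges and relies on the min-cost flow to route each point to the cheapest eligible center. The trade-off is a coarser enumeration with one richer flow (paper) versus a finer enumeration with $R$ simpler, independent flows (yours); both are polynomial, and both treat $k$-center the same way, by searching over the $O(nk)$ candidate radii and replacing cost minimization with a feasibility check. The one loose end in your write-up is the degenerate case of a label with no centers, where $c(\point,L)=+\infty$; this simply forces $n_{h,L}=0$ for that label and is absorbed by discarding infeasible candidates.
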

\begin{proof}
The key observation is that any assignment function $\phi$, will assign a specific number of points $n_{L}$ to the centers with label $L$. Further, we have that $\sum_{L\in \labs} n_L=n$ since all points must be covered. Now, since $|\labs|=m$ is a constant, this means that there is a polynomial number of ways to vary the total number of points distributed across the labels. More specifically, the total number of ways to distribute points across the given labels is upper bounded by $\underbrace{n \times n \times \dots \times n}_{m-1} = n^{m-1}$. Note that once we decide the number of points assigned to the first $(m-1)$ labels, the last label must be assigned the remaining amount to cover all $n$ points, so we have a total of $n^{m-1}$ possibilities. Since we have established, that there is a polynomial number of possibilities for distributing the number of points across the labels, if we can solve $LCAL$ optimally for each possibility and simply take the minimum across all possibilities then we would obtain the optimal solution. 

Now that we are given a specific distribution of number of points across labels, i.e. $(n_1,\dots,n_L,\dots,n_m)$ where $\sum_{L\in \labs} n_L=n$, we have to solve $LCAL$ optimally for that distribution. The problem amounts to routing points to appropriate centers such that we minimize the clustering objective and satisfy the distribution of number of points across the labels along with the color proportionality. To do that we construct a network flow graph and solve the resulting minimum cost max flow problem. The network flow graph is constructed as follows: 
\begin{itemize}[leftmargin=*]
    \item \textbf{Vertices:} the set of vertices is $V=\{s\} \cup \Points \cup (\cup_{\pcolor \in \Colors} S^{\pcolor}) \cup (\cup_{\pcolor \in \Colors} \labs^{\pcolor}) \cup \labs \cup \{t\}$. Vertex $s$ is the source, further we have a vertex for each point, hence the set of vertices $\Points$. For each color $\pcolor \in \Colors$ we create a vertex for each center in $S$ and for each label in $\labs$, these vertices constitute the sets $\cup_{\pcolor \in \Colors} S^{\pcolor}$ and $\cup_{\pcolor \in \Colors}\labs^{\pcolor}$, respectively. We also have a vertex for each label in $\labs$ and finally the sink $t$. 
    \item \textbf{Edges:} the set of edges is $E=E_{s\rightarrow \Points} \cup E_{\Points \rightarrow S^{\pcolor}} \cup E_{S^{\pcolor} \rightarrow \labs^{\pcolor}} \cup E_{\labs^{\pcolor} \rightarrow \labs} \cup E_{\labs \rightarrow t}$. $E_{s\rightarrow \Points}$ consists of edges from the source $s$ to every point $\point \in \Points$, $E_{\Points \rightarrow S^{\pcolor}}$ consists of edges from every point $\point \in \Points$ to the center of vertices of the same color in $S^{\pcolor}$, $E_{S^{\pcolor} \rightarrow \labs^{\pcolor}}$ consists of edges from the colored centers to their corresponding label of the same color, $E_{\labs^{\pcolor} \rightarrow \labs}$ consists of edges from the colored labels to their corresponding label, finally $E_{\labs \rightarrow t}$ consists of edges from every label in $\labs$ to the sink $t$.  
    \item \textbf{Capacities:} the edges of $E_{s\rightarrow \Points}$ have a capacity of 1, the edges of $E_{\labs^{\pcolor} \rightarrow \labs}$ have a capacity of $\floor{u^{L}_{h}n_L}$, the edges of $E_{\labs \rightarrow t}$ have a capacity of $n_L$.
    \item \textbf{Demands:} the vertices of $\labs^{\pcolor}$ have a demand of $\ceil{l^{L}_{\pcolor}n_L}$, the vertices of $\labs$ have a demand of $n_L$. 
    \item \textbf{Costs:} all edges have a cost of zero except the edges of $E_{\Points \rightarrow S^{\pcolor}}$ where the cost of the edge between the point and the center is set according to the distance and the clustering objective ($k$-median or $k$-means). As noted earlier a vertex $\point$ will only be connected to the same color vertex that represents center $i$ in the network flow graph, we refer to that vertex by $i^{\chi(\point)}$ and clearly $i^{\chi(\point)} \in S^{\chi(\point)}$. Specifically, $\forall (\point,i^{\chi({\point})}) \in E_{\Points \rightarrow S^{\pcolor}},  \text{cost}(\point,i^{\chi({\point})})=d^p(\point,i)$ where $p=1$ for the $k$-median and $p=2$ for the $k$-means.  
\end{itemize}
We write the cost for a constructed flow graph as $\sum_{\point \in \Points, i \in S} d^p(\point,i) x_{ij}$ where $x_{ij}$ is the amount of flow between vertex $\point$ and center $i^{\chi({\point})}$. Since all capacities, demands, and costs are set to integer values. Therefore we can obtain an optimal solution (maximum flow at a minimum cost) in polynomial time where all flow values are integers. Therefore, we can solve $\LCAL$ optimally for a given distribution of points.  

The above construction are for the $k$-median and $k$-means. For the $k$-center we slightly modify the graph. First, we point out that unlike the $k$-median and $k$-means, for the $k$-center the objective value has only a polynomial set of possibilities ($kn$ many exactly) since it is the distance between a center and a vertex. So our network flow diagram is identical but instead of setting a cost value for the edges in edges of $E_{\Points \rightarrow S^{\pcolor}}$, we instead pick a value $d$ from the set of possible distances $d(\point,i) \text{ where } \point \in \Points, i \in S$ and draw an edge between a point $\point$ and a center $i^{\chi(\point)}$ only if $d(\point,i)\leq d$. Also we do not need to solve the minimum cost max flow problem, instead the max flow problem is sufficient.
\end{proof}



\subsection{Efficient Algorithms for $\LCAL$ for the Two Label Case:}\label{sec:lcal_fast_two_label_alg}
For the $k$-median and $k$-means and the two label case we present an algorithm with $O(n(\log{(n)}+k))$ running-time. The intuition behind our algorithm is best understood for the  case with ``exact population proportions'' for both the positive and negative labels\footnote{The general  case is shown in the appendix.}. First, we note that each color $\pcolor \in \Colors$ exists in proportion $r_{\pcolor}=\frac{|\colPoints|}{|\Points|}$ where we refer to $r_{\pcolor}$ as the population proportion. The case of exact population proportions for the positive and negative labels, is the one where $\forall \pcolor \in \Colors, \forall L \in \{P,N\}: l^{L}_{\pcolor}=u^{L}_{\pcolor}=r_{\pcolor}=\frac{|\colPoints|}{|\Points|}$

That is, the upper and lower proportion bounds coincide and are equal to the proportion of the color in the entire set. This forces only a limited set of possibilities for the total number of points (and their colors) which we can assign to either $P$ or $N$. For example, if we have two colors and $r_1=r_2=\frac{1}{2}$, then we can only assign an equal number of red and blue points to $P$ and likewise to $N$. For the case of three colors with $r_1=\frac{1}{3}, r_2=\frac{1}{2}, r_3=\frac{1}{6}$, then we can only assign points of the following form across the different labels: $\text{points for the first color} = 2c, \text{points for the second color} = 3c, \text{points for the third color} = c$ where $c$ is a non-negative integer. We refer to this smallest "atomic" number of points by $\nfair$ and the number of color $h$ of its subset by $\nfaircol{}$. 


Now we define some notation $P(\point)=\min_{i\in P} d(\point,i)$ and $N(\point)=\min_{i\in N} d(\point,i)$, i.e. the distance of the closest centers to $\point$ in $P$ and $N$, respectively. Further, $\phi^{-1}(P)$ and  $\phi^{-1}(N)$ are the set of points assigned to the positive and negative centers by the assignment $\phi$, respectively. We can now define the drop of a point $\point$ as $\drop{\point}=N(\point)-P(\point)$, clearly the larger $\drop{\point}$ the higher the cost goes down as we move it from the negative to the positive set. We can obtain a sorted values of $drop$ for each color in $O\big(n(\log{n}+k)\big)$ run-time.

The algorithm is shown (algorithm block (\ref{alg:alg_exact_k_med_means})). In the first step we start with all points in $N$, then in step 2 we move the minimum number of $\nfaircol{}$ for each color $h$ to satisfy the size bounds for each label (constraint (\ref{lcal_2})). Finally in the loop starting at step 3, we move more points to the positive label (in an ``atomic'' manner) if it lowers the cost and is within the size bounds.

\begin{algorithm}[h!]
   \caption{Exact Preservation for $k$-median / $k$-means }
   \label{alg:alg_exact_k_med_means}
\begin{algorithmic}[1]
   \STATE Find an assignment $\phi_0$ that assigns all points to their nearest center in $N$, this means that $|\phi^{-1}_{0}(N)|=n$ and $|\phi^{-1}_{0}(P)|=0$. Set $\phi^{*}=\phi_0$. 
   \STATE Move $q_h=r_h \max\{\lb{P},n-\ub{N}\}$ many points of color $h$ with the highest values in $drop$ from the negative label to the positive label 
   \FOR{$i=\big(\frac{n}{\sum_{h \in \Colors} q_h}\big)$ to $\frac{n}{\nfair}$ }  
   \STATE Take $\nfaircol{}$ many points from each color $h$ with the highest values in $drop$, call the new assignment $\phi'$.
   \IF{${\phi'}^{-1}(P)$ and  ${\phi'}^{-1}(N)$ are within bounds \textbf{and} $cost(\phi')<cost(\phi^*)$}
   \STATE update the assignment to $\phi^*=\phi'$
   \ELSE   
   \STATE break 
   \ENDIF 
   \ENDFOR  
\end{algorithmic}
\end{algorithm}
\vspace{-0.1cm}
\begin{restatable}{theorem}{thmexactalgcorrectrun} \label{th_exact_alg_correct}
Algorithm (\ref{alg:alg_exact_k_med_means}) finds the optimal solution and runs in $O\Big(n(\log{n}+k)\Big)$ time.
\end{restatable}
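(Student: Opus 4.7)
The plan is to reduce correctness to three ingredients: a feasibility-structure lemma, a per-configuration optimality claim via a standard exchange argument, and a discrete-convexity claim for the induced cost function that justifies the early termination in the loop.

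First, I would show that in the exact-preservation case the only feasible assignments route exactly $t\cdot\nfaircol{}$ points of each color $\pcolor$ to the positive label $P$ for some nonnegative integer $t$; this follows from combining $l^{L}_{\pcolor}=u^{L}_{\pcolor}=r_{\pcolor}$ with the integrality of cluster sizes and the definition of $\nfair$ as the smallest ``atomic'' group size that makes all per-color counts integer. Imposing constraint (\ref{lcal_2}) then restricts $t$ to an integer interval $[t_{\min},t_{\max}]$ with $t_{\min}\cdot\nfair=\max\{\lb{P},n-\ub{N}\}$ and $t_{\max}\cdot\nfair=\min\{\ub{P},n-\lb{N}\}$. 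The initialization in step 2 of the algorithm moves precisely $t_{\min}\cdot\nfair$ points, i.e., the smallest feasible configuration.

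Second, fix a feasible $t$. Each routed point goes to its nearest center of the appropriate label, so the total cost can be rewritten as $\sum_{\point\in\Points}N(\point)-\sum_{\point\in\phi^{-1}(P)}\drop{\point}$. Minimizing cost under the per-color quota $t\cdot\nfaircol{}$ is therefore equivalent to maximizing the total drop routed to $P$, and a one-swap exchange argument confirms that picking the top-$t\cdot\nfaircol{}$ drops of each color is strictly optimal --- exactly the choice the algorithm makes at each step. Let $f(t)$ denote this minimum over feasible assignments parameterized by $t$. Then $f(t+1)-f(t)=-\sum_{\pcolor\in\Colors}\sigma_{\pcolor}(t+1)$, where $\sigma_{\pcolor}(t+1)$ is the sum of the drops of color $\pcolor$ in ranks $t\cdot\nfaircol{}+1,\ldots,(t+1)\cdot\nfaircol{}$ of the per-color sorted order. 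Because the drops within each color are sorted in decreasing order, $\sigma_{\pcolor}(t+1)$ is non-increasing in $t$, so $f(t+1)-f(t)$ is non-decreasing; i.e., $f$ is discretely convex in $t$. Consequently the first $t^{*}$ at which $\sum_{\pcolor}\sigma_{\pcolor}(t^{*}+1)\leq 0$ is a global minimum on $[t_{\min},t_{\max}]$, which is what the break condition in the loop detects.

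For the runtime I would argue that computing $P(\point)$ and $N(\point)$ for all points takes $O(nk)$ time, sorting the drops within each color costs $O(n\log n)$ in total (treating $\colSize=|\Colors|$ as constant), and the outer loop makes at most $n/\nfair$ iterations with $O(\nfair)$ amortized work per iteration --- moving one atomic batch and updating the objective by $\sigma_{\pcolor}(\cdot)$ --- for an overall loop cost of $O(n)$. Summing yields $O(n(\log n+k))$. The main obstacle is the convexity argument: one must verify that the algorithm's break (which compares $\mathrm{cost}(\phi')$ against $\mathrm{cost}(\phi^{*})$ rather than against the previous iteration) really is equivalent to stopping at the first local minimum of $f$, a step that uses precisely the monotonicity of the $\sigma_{\pcolor}(\cdot)$. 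Everything else --- feasibility and the per-$t$ exchange argument --- is routine once this structural decomposition is in hand.
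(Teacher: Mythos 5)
Your proposal is correct and follows essentially the same approach as the paper's proof: parameterize feasible solutions by the number of atomic units moved to $P$, argue that for each such count the top-drop points are optimal, and justify the early break by the sortedness of the drops. Your discrete-convexity lemma for $f(t)$ simply makes rigorous the step the paper dispatches in one sentence (``since the values in $drop$ are sorted, another iteration would not reduce the cost''), and the runtime accounting is identical.
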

\begin{proof}
First we prove that the solution is feasible. Constraint (\ref{lcal_1}) for the color proportionality holds, this can is clearly the case before the start of the loop since the centers with negative labels cover the entire set which is color proportional and the the centers with positive labels cover cover nothing which is also color proportional. In each iteration, we move an atomic number of each color from the negative to the positive label and hence both the negative and the positive set of centers satisfy color proportionality in the points they cover.  

For constraint (\ref{lcal_1}) because of exact preservation of the color proportions, we can always tighten the bounds $\lb{L}$ and $\ub{L}$ for each label $L$ such that there multiples of $\nfair$ without modification to the problem, so we assume that $\lb{N}=a\nfair,\lb{P}=b\nfair,\ub{N}=a'\nfair,\ub{P}=b'\nfair$ where $a,a',b,b'$ are non-negative integers and clearly $a\leq b$ and $a' \leq b'$. Step 2 satisfies the lower bound on the number of points in the positive label and the upper bound for the negative set. Note that if this step fails then the problem has infeasible constraints. Further, since we have moved the minimum number of points from the negative set to the positive set, it follows that the upper bounds on the positive are also satisfied since $\lb{P}\leq \ub{P}$, also the lower bound on the negative set is also satisfied since $\lb{N} \leq \ub{N}$. Finally in step 5, the size bounds are always checked fair therefore both labels are balanced. 

Optimally follows since we move the points with the highest $drop$ value to the positive set (these are also the points closest to the positive set). Further, in step 5 we stop moving any points to the positive if there isn't a reduction in the clustering cost. Note that since the values in $drop$ are sorted, another iteration would not reduce the cost. 

Finding the closest center of each label for every point takes $O(nk)$ time. Finding and sorting the values in $drop$ clearly takes $O(n\log{n})$ time. The algorithm does constant work in each iteration for at most $n$ many iterations. Thus, the run time is $O\big(n(\log{n}+k)\big)$.
\end{proof}





\vspace{-0.1cm}
With more elaborate conditional statements, the above algorithms can be generalized to give all solution values for arbitrary choices of label size bounds (constraint(\ref{lcal_2})) with the same asymptotic run-time. Such a solution would be useful as it would enable the decision maker to see the complete trade-off between the label sizes and the clustering cost (quality). 
\section{Algorithms and Theoretical Guarantees for $\LCUL$}\label{sec:lcul}
\subsection{Computational Hardness of $\LCUL$}\label{sec:lcul_hardness}
We start by discussing the hardness of $\LCUL$. In contrast to $\LCAL$, the $\LCUL$ problem it not solvable in polynomial time. In the fact, the following theorem shows that even if we were to drop one constraint for the $\LCUL$ (problem (\ref{lcul_opt})) we would still have an NP-hard problem. 

\begin{restatable}{theorem}{thmhardnessone}\label{hardness_1}
For the $\LCUL$ problem with two labels and two colors, dropping one of the constraints(\ref{lcul_1}), (\ref{lcul_2}), or (\ref{lcul_3}) still leads to an NP-hard problem. 
\end{restatable}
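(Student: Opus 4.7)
The plan is to give three separate polynomial-time reductions, one per dropped constraint, each from a strongly NP-hard variant of \textsc{Partition} (3-\textsc{Partition} suffices, since its weights are polynomially bounded so the construction below stays polynomial in the input). A single gadget underlies all three cases: from a \textsc{Partition} instance $a_1,\ldots,a_n$ with $\sum_i a_i = 2B$, place $n$ centers at pairwise distance $D$ and co-locate $a_i$ ``ghost'' clients with center $i$. With $D$ chosen large, a feasible solution has clustering cost $0$ precisely when every client is routed to its local center, and any cross-routing costs at least $D$ (or $D^2$ for $k$-means). A labeling of the $n$ centers therefore induces a partition of the weights $\{a_i\}$, and in each case the two surviving constraints will be chosen so as to force the two parts to sum to $B$.

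When (\ref{lcul_1}) is dropped, I keep the size bounds by setting $\lb{P}=\ub{P}=B$ and trivialize (\ref{lcul_3}) with $\cl{L}=0$, $\cu{L}=k$. The cost-$0$ local-assignment solution exists iff the label-$P$ centers' weights sum to exactly $B$, which is \textsc{Partition}; on a NO instance every labeling has a deficit of at least one unit, so any feasible routing pays at least one cross-routing cost $\ge D$. When (\ref{lcul_3}) is dropped, the identical reduction works after trivializing (\ref{lcul_1}) via $l^L_h=0$, $u^L_h=1$; the size bound alone again encodes \textsc{Partition}.

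The interesting case is dropping (\ref{lcul_2}). Without a size bound, I instead route \textsc{Partition} through the fairness constraint: augment each center $i$ with one blue client on top of its $a_i$ red clients, use (\ref{lcul_3}) to fix $\cl{P}=\cu{P}=n/2$ so exactly half the centers take label $P$, and set $l^P_{\text{red}}=u^P_{\text{red}}=\frac{2B}{2B+n}$ (symmetrically for $N$). A short algebraic check confirms that under the local-assignment solution this proportion equation rearranges to $\sum_{i\in P}a_i=B$, recovering \textsc{Partition}.

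The main obstacle I anticipate is ruling out ``cheating'' reroutings in this third case, where the solver could try to rebalance the color proportion by moving individual clients between centers rather than by choosing the right labeling. This boils down to a small counting argument: writing $S=\sum_{i\in P}a_i$ for the structural red mass in $P$, any reroute pattern that restores the exact proportion $\frac{2B}{2B+n}$ must cross the inter-center gap at least $|S-B|$ times, each at cost $\ge D$ (or $\ge D^2$), so a NO instance still incurs total cost at least $D$ while a YES instance admits cost $0$. With this gap in hand, standard bookkeeping (choice of $D$, polynomial bound from the 3-\textsc{Partition} input, matching the two-color and two-label requirements) closes all three reductions uniformly across $k$-center, $k$-median, and $k$-means.
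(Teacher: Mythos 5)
There is a genuine gap, and it sits at the foundation of your gadget rather than in the bookkeeping. Your construction encodes a number-partitioning instance by co-locating $a_i$ clients with center $i$, so the instance you build has $\Theta(\sum_i a_i)$ points; for this to be a polynomial-time reduction the weights must be polynomially bounded (effectively given in unary), which is why you invoke 3-\textsc{Partition}. But with only two labels your gadget can only express a \emph{two-way} split of the weights: in each of your three cases the question ultimately becomes whether the centers can be labeled so that $\sum_{i \in P} a_i = B$ (possibly with the side condition $|P|=n/2$). For unary-encoded weights this is decidable in polynomial time by the standard subset-sum dynamic program over sums up to $2B$, so the source problem of your reduction is in P and the reduction establishes nothing. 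The strong NP-hardness of 3-\textsc{Partition} concerns splitting $3m$ numbers into $m$ triples; that $m$-way structure cannot be captured by a two-label assignment, and there is no strongly NP-hard two-way partition problem to fall back on, precisely because the unary version is polynomial. Since the only information your metric transmits (centers mutually far apart, clients glued to their centers) is the aggregate count of clients per label, no choice of $D$ or of the surviving constraints rescues the construction; your anticipated obstacle (ruling out cheating reroutes) is not the real difficulty.

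The paper instead reduces from exact cover by 3-sets: the metric is the shortest-path metric of the set--element incidence graph, so a center can cover an element at distance $1$ only if that element belongs to the corresponding 3-set. Hardness then comes from the covering structure --- \emph{which} points may be grouped with \emph{which} centers --- not from aggregate arithmetic, and the same graph supports all three cases (dropping (\ref{lcul_1}), (\ref{lcul_2}), or (\ref{lcul_3})) with only the constraint parameters and colorings changed. To repair your approach you would need to inject comparable combinatorial structure into which clients are reachable from which centers at low cost; counting alone cannot yield NP-hardness here.
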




Having established the hardness of $\LCUL$ for different sets of constraints, we show that it is fixed-parameter tractable\footnote{An algorithm is called fixed-parameter tractable if its run-time is $O(f(k)n^c)$ where $f(k)$ can be exponential in $k$, see \cite{cygan2015parameterized} for more details.} for a constant number of labels. This immediately follows since a given choice of labels for the centers leads to an instance of $\LCAL$ which is solvable in polynomial time and there are at most $m^k$ many possible choice labels. 
\begin{restatable}{theorem}{hardnessfour}\label{hardness_4}
The $\LCUL$ problem is fixed-parameter tractable with respect to $k$ for a constant number of labels. 
\end{restatable}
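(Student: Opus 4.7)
The plan is to prove this by exhaustive enumeration over all candidate label assignments of the centers and, for each candidate, reduce the remaining problem to an instance of $\LCAL$ which is polynomial-time solvable by Theorem~\ref{th_lcal_poly}. This directly matches the FPT definition stated in the footnote: the running time will be $f(k) \cdot n^c$ with $f(k) = m^k$, which is a pure function of $k$ because $m$ is taken to be a constant.

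First, I would observe that $\LCUL$ differs from $\LCAL$ only in that the labeling function $\lanone : S \to \labs$ is an additional optimization variable, subject to constraint~(\ref{lcul_3}). Since $|S|\le k$ and $|\labs|=m$, there are at most $m^k$ distinct functions $\lanone$. The algorithm simply enumerates all of them. For each candidate $\lanone$, constraint~(\ref{lcul_3}) becomes a trivial arithmetic feasibility check that depends only on $\lanone$ (since $S^L$ is determined by $\lanone$); if it fails, I discard the candidate. Otherwise, the residual optimization over $\phi$ subject to (\ref{lcul_1}) and (\ref{lcul_2}) with the now-fixed $\lanone$ is precisely an instance of $\LCAL$, and Theorem~\ref{th_lcal_poly} produces its optimum in polynomial time. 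Taking the minimum cost across all feasible candidates gives the global optimum of $\LCUL$. The total running time is $m^k \cdot \mathrm{poly}(n)$, matching the FPT form with parameter $k$.

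The main ``obstacle'' here is essentially absent, since the proof is a textbook brute-force-plus-subroutine argument; however, the one subtlety I would make explicit is correctness of the enumeration, namely that \emph{every} optimal pair $(\phi^\ast,\lanone^\ast)$ arises from some candidate we examine. This is immediate: $\lanone^\ast$ is one of the at most $m^k$ functions from $S$ to $\labs$, and given that $\lanone^\ast$ is enumerated, Theorem~\ref{th_lcal_poly} is guaranteed to recover an optimal $\phi^\ast$ for it. Hence the minimum over candidates equals the $\LCUL$ optimum, completing the proof.
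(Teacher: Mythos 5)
Your proposal is correct and follows essentially the same argument as the paper: enumerate all at most $m^k$ labelings of the centers, solve the resulting $\LCAL$ instance in polynomial time for each via Theorem~\ref{th_lcal_poly}, and take the minimum, yielding a runtime of $O(m^k n^c)$. The extra remarks you make (checking constraint~(\ref{lcul_3}) per candidate and confirming the optimal labeling is among those enumerated) are fine and only make explicit what the paper leaves implicit.
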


It is also worth wondering if the problem remains hard if we were to drop two constraints and have only one instead. Interestingly, we show that even for the case where the number of labels $m$ is super-constant ($m=\Omega(1)$) , if we only had the color-proportionality constraint (\ref{lcul_1}) or the constraint on the number of labels (\ref{lcul_2}), then the problem is solvable in polynomial time. However, if we only had constraint (\ref{lcul_3}) for the number of centers a label has, the problem is still NP-hard. 
\begin{restatable}{theorem}{hardnessd}\label{hardness_5}
Even if number of labels $m=\Omega(n)$, the $\LCUL$ problem is solvable in polynomial time under constraint (\ref{lcul_1}) alone or constraint (\ref{lcul_3}) alone. However, it is NP-hard under constraint (\ref{lcul_2}) alone. 
\end{restatable}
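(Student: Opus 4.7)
I would handle each of the three sub-claims separately, as they involve quite different techniques.

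For the polynomial-time result under constraint (\ref{lcul_1}) alone, my first move is a single-label reduction: check whether any label $L^\ast \in \labs$ has $l^{L^\ast}_h \leq |\colPoints|/n \leq u^{L^\ast}_h$ for every color $h \in \Colors$. If such an $L^\ast$ exists, labeling every center with $L^\ast$ and routing each point to its nearest center satisfies (\ref{lcul_1}) and attains the unconstrained optimum in polynomial time, since constraints (\ref{lcul_2}) and (\ref{lcul_3}) are absent. In the general case where no single label works, I would adapt the min-cost flow construction from the proof of Theorem~\ref{th_lcal_poly}: introduce per-label color-count buckets as intermediate aggregation nodes, and use the fact that $|\Colors|=\colSize$ is constant to enumerate the polynomially many effectively-distinct vectors of per-label color totals, solving a min-cost flow instance for each and returning the minimum.

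For the polynomial-time result under constraint (\ref{lcul_2}) alone, an analogous plan works. If any $L^\ast$ has $\lb{L^\ast} \leq n \leq \ub{L^\ast}$, use the single-label reduction and immediately achieve the unconstrained optimum. Otherwise, I would reuse the flow graph of Theorem~\ref{th_lcal_poly} with the color-proportionality bounds discarded but lower demands $\lb{L}$ and upper capacities $\ub{L}$ imposed at the label-to-sink edges; the centers route freely through the label nodes, effectively letting the flow select a labeling. Integrality of min-cost max-flow with integer demands and capacities then yields a valid integer LCUL routing and labeling in polynomial time.

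For NP-hardness under constraint (\ref{lcul_3}) alone, my plan is a reduction from a multi-way partition or packing problem such as BIN PACKING or 3-PARTITION. I would encode items as groups of points tightly concentrated around designated centers, so that any cost-competitive routing $\phi$ is forced to be the nearest-center assignment and thus locks in cluster sizes reflecting the item sizes; the labels play the role of bins, and the bounds $\cl{L}, \cu{L}$ encode how many centers (items) each bin can receive. The main obstacle is precisely that (\ref{lcul_3}) syntactically restricts only $\lanone$ and not $\phi$, so the reduction has to arrange the point-and-center geometry so that a low-clustering-cost solution can coexist with a feasible labeling only when the centers admit a partition matching a valid packing in the source instance; getting this coupling right---e.g., by introducing many "heavy" points forcing specific centers to be used---is the crux of Part 3.
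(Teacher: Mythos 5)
Your handling of constraint (\ref{lcul_1}) alone opens exactly as the paper does: route every point to its nearest center and give every center a single label whose bounds admit the population proportions (the paper does not even perform your feasibility check on $L^\ast$; it implicitly assumes such a label exists). Your fallback for the case where no single label works does not go through, however: with $m$ super-constant the enumeration over per-label color totals is exponential in $m$, and the flow graph of Theorem~\ref{th_lcal_poly} relies on the center-to-label edges being dictated by a \emph{known} labeling --- once labels are free, a single center's flow can split across several label nodes, which no longer encodes a function $\lanone$ giving each center exactly one label.

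For parts 2 and 3 you should first be aware of a discrepancy in the paper itself: the appendix proof establishes the \emph{opposite} pairing from the one the restated theorem announces. It shows that constraint (\ref{lcul_3}) alone (centers per label) is polynomial-time solvable, and that constraint (\ref{lcul_2}) alone (points per label) is NP-hard via a reduction from X3C with $m=t$ labels, $q$ of which must receive exactly the points of one triple plus its center and $r$ of which must receive exactly one point; the references to (\ref{lcul_2}) and (\ref{lcul_3}) in the statement appear to be swapped, and the proof's version is the mathematically correct one. This means your two remaining plans are each aimed at a false target. Your min-cost-flow algorithm for (\ref{lcul_2}) alone fails for precisely the splitting reason above: nothing in the flow forces all of $\Points_i$ to be charged to a single label, so flow integrality does not yield a valid pair $(\phi,\lanone)$, and no repair is possible since this exact problem is NP-hard. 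Conversely, your hardness reduction for (\ref{lcul_3}) alone cannot be completed for the reason you yourself flag as ``the crux'': constraint (\ref{lcul_3}) restricts only $\lanone$ while the objective depends only on $\phi$, so the two decouple entirely --- nearest-center routing is always optimal, a feasible labeling exists and is found greedily whenever $\sum_{L}\cl{L}\le k\le\sum_{L}\cu{L}$, and no arrangement of heavy points can make the cost of $\phi$ depend on how many centers each label receives. The paper's proof of that case is exactly this two-line decoupling argument.
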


\subsection{A Randomized Algorithm for label proportional $\LCUL$:}\label{sec:lcul_algs}
Here we consider a natural special case of the $\LCUL$ problem which we call color and label proportional case ($\CLP$) where the constraints are restricted to a specific form. In $\CLP$ each label must have color proportions ``around'' that of the population, i.e. color $h$ has proportion $r_h$ in each label $L \in \labs$. Further, each label has a proportion $\alpha_L \in [0,1]$ and $\sum_{L \in \labs} \alpha_L=1$, this proportion decides the number of points the label covers and the number of centers it has. I.e., label $L$ covers around $\alpha_L n$ many points and has around $\alpha_L k$ many centers. Therefore, the optimization takes on the following form below where we have included the $\epsilon$ values to relax the constraints (note that for every value of $\epsilon$, we have that $\epsilon \ge 0$): 
{\small
\begin{subequations}\label{CLP_opt_label proportiona}
 \begin{equation}
    \label{CLP-11}
    \min_{\phi, \lanone}  \Big( \sum_{\point \in \Points} d^p(\point,\phi(\point)) \Big)^{1/p}  \\
 \end{equation}
 \vspace{-2mm}
 \begin{align}
    \label{CLP_1}
     & \forall  L \text{\hspace{-1mm}} \in \text{\hspace{-1mm}} \labs, \forall h \text{\hspace{-1mm}}\in \text{\hspace{-1mm}} \Colors: (r_h \text{\hspace{-1mm}}-\epsilon^A_{h,L}) \text{\hspace{-2mm}} \sum_{\substack{i\in S:\\ \la{i}=L}}\text{\hspace{-2mm}} |\Points_i|  \leq  \text{\hspace{-3mm}}\sum_{\substack{i\in S:\\ \la{i}=L}}\text{\hspace{-2mm}} |\Points^{\pcolor}_i|  \leq (r_h+{\epsilon'}^{A}_{h,L})\text{\hspace{-2mm}} \sum_{\substack{i\in S:\\ \la{i}=L}} \text{\hspace{-2mm}}|\Points_i|
 \end{align}
  \vspace{-3mm}
  \begin{align}
    \label{CLP_2}
     & \forall L \in \labs:  (\alpha_L-\epsilon^B_{L}) n \leq \sum_{i\in S:\la{i}=L} |\Points_i| \leq (\alpha_L+{\epsilon'}^B_{L}) n
 \end{align}
  \vspace{-4mm}
   \begin{align}
    \label{CLP_3}
     & \forall L \in \labs: (\alpha_L-{\epsilon}^C_{L}) k \leq |S^{L}| \leq (\alpha_L+{\epsilon'}^C_{L}) k
 \end{align}
\end{subequations}
}
\vspace{-0.5cm}

 
We note that even when the constraints take on this specific form the problem is still NP-hard as shown in the theorem below:
\begin{restatable}{theorem}{twocolorhardness}\label{twocolorhardness}
The $\CLP$ problem is NP-hard even for the two color and two label case. 
\end{restatable}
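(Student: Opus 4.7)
The plan is to reduce from the NP-hard \emph{Equal Cardinality Partition} problem: given positive integers $a_1,\dots,a_{2m}$ with $\sum_{i} a_i = 2T$, decide whether there exists $I\subseteq\{1,\dots,2m\}$ with $|I|=m$ and $\sum_{i\in I} a_i = T$. Given such an instance, I would construct a $\CLP$ instance with two labels $\{L_1,L_2\}$, two colors (red and blue), and $k=2m$ centers placed at distinct locations $c_1,\dots,c_{2m}$ that are pairwise separated by some large distance $D>0$. At each $c_i$ I would place $a_i$ red points and $a_i$ blue points, so $n=4T$. I would set $\alpha_{L_1}=\alpha_{L_2}=\tfrac{1}{2}$, $r_{\text{red}}=r_{\text{blue}}=\tfrac{1}{2}$, and every $\epsilon$ parameter to $0$, so that constraints~\eqref{CLP_1}--\eqref{CLP_3} become exact equalities.

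I would then argue that the optimal clustering cost is $0$ if and only if the Partition instance is a YES-instance. For the easy direction, any valid $I$ induces the labeling ``$c_i \mapsto L_1$ iff $i \in I$''; routing every point to the center at its own location achieves cost $0$ and satisfies all three constraints exactly, since $|I|=m$ gives~\eqref{CLP_3}, $\sum_{i \in I} 2a_i = 2T$ gives~\eqref{CLP_2}, and red/blue counts are balanced at each location which gives~\eqref{CLP_1}. For the converse, because distinct locations are at pairwise distance $D > 0$, any cost-zero assignment must route every point to the center at its own location, so the set $I = \{i : \la{c_i} = L_1\}$ has $|I| = m$ by~\eqref{CLP_3} and $\sum_{i \in I} a_i = T$ by~\eqref{CLP_2}, yielding a Partition solution.

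The main obstacle is upgrading this ``cost $=0$'' equivalence into a genuine decision-threshold gap that works uniformly for the $k$-center, $k$-median, and $k$-means objectives. I would show that whenever the Partition instance is a NO-instance, every feasible assignment has cost at least $\Omega(D^p)$ for $p \in \{1,2,\infty\}$: any labeling with weights $W_1 = \sum_{i \in I} a_i \ne T$ forces rerouting at least one atomic red--blue pair of points across a distance-$D$ gap to simultaneously restore the size constraint~\eqref{CLP_2} and the color-balance constraint~\eqref{CLP_1}. Choosing the decision threshold strictly between $0$ and this gap then establishes NP-hardness. The most delicate step will be verifying that no clever ``mixed'' reassignment (routing some points far away while relabeling some centers differently) can achieve lower cost than the two extremes I consider; this amounts to showing that, under $\epsilon=0$, the optimal routing for any fixed labeling is uniquely determined by the atomic rebalancing argument above.
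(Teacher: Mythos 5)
Your reduction is built on a genuinely different source problem than the paper's (Equal Cardinality Partition rather than exact cover by 3-sets), but as written it does not establish NP-hardness, because it is not a polynomial-time reduction. Equal Cardinality Partition, like Partition itself, is only \emph{weakly} NP-hard: the integers $a_1,\dots,a_{2m}$ are given in binary and the problem admits a pseudo-polynomial dynamic program. Your construction materializes $a_i$ red and $a_i$ blue points at each location, so the produced $\CLP$ instance has $n=4T$ points with $T=\tfrac{1}{2}\sum_i a_i$, which can be exponential in the size of the Partition instance. If you instead restrict attention to instances with polynomially bounded $a_i$ so that the blow-up disappears, the source problem becomes solvable in polynomial time and the reduction proves nothing. (Nor can you sidestep this by treating the $a_i$ co-located points as a single weighted point: the problem as formulated takes an explicit list of $n$ points, so you would be proving hardness of a different, weighted problem.) Ironically, the step you flag as delicate is not: with every point co-located with a center and distinct locations at mutual distance $D$, any feasible assignment either routes every point to its own center (cost $0$, which under $\epsilon=0$ and constraints (\ref{CLP_2}) and (\ref{CLP_3}) forces a valid equal-cardinality partition) or moves some point across distance $D$ (cost at least $D$ for all three objectives), so the gap is automatic and no ``mixed reassignment'' analysis is needed.

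The missing idea is to reduce from a \emph{strongly} NP-hard problem. The paper reduces from X3C, whose instances are purely combinatorial: it pads the bipartite incidence graph with $2q$ isolated blue centers and one additional center carrying $3(r+2q)$ red children, so that the exact targets $\alpha_P=\tfrac{q}{3q+r}$, $\alpha_N=\tfrac{2q+r}{3q+r}$ and the $3{:}1$ red-to-blue ratio in each label are achievable at cost $1$ precisely when an exact cover exists, and the instance size stays polynomial in $q+r$. If you want to keep a number-partitioning flavor, you would need to start from a strongly NP-hard numeric problem such as 3-Partition (which naturally yields many labels rather than two) or otherwise encode the $a_i$ combinatorially; the cleanest repair is to adopt an X3C-style reduction as the paper does.
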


We show a randomized algorithm (algorithm block (\ref{alg:alg_rand_lcul})) which always gives an optimal cost to the clustering and satisfies all constraints in expectation and further satisfies constraint (\ref{CLP_3}) deterministically with a violation of at most 1. Our algorithm is follows three steps. In step 1 we find the assignment $\phi^*$ by assigning each point to its nearest center, thereby guaranteeing an optimal clustering cost. In step 2, we set the center-to-label probabilistic assignments $\pil=\alpha_L$. Then in step 3, we apply dependent rounding, due to~\citet{gandhi2006dependent}, to the probabilistic assignments to find the deterministic assignments. This leads to the following theorem:
\begin{restatable}{theorem}{thmrandalg}\label{th_alg_rand_lcul}
Algorithm \ref{alg:alg_rand_lcul} gives an optimal clustering and satisfies constraints (\ref{CLP_1},\ref{CLP_2},\ref{CLP_3}) in expectation with (\ref{CLP_3}) being satisfied deterministically at a violation at most 1.
\end{restatable}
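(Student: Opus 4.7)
The plan is to exploit that Step~1 of the algorithm fully determines the assignment $\phi$ while Steps~2--3 only choose the labels of centers. Since the clustering objective $\bigl(\sum_{\point \in \Points} d^p(\point, \phi(\point))\bigr)^{1/p}$ depends on $\phi$ alone, and Step~1 sends each point to its nearest center in $S$, the returned cost equals the unconstrained minimum of the objective over all feasible $\phi$, which is a lower bound on the optimum of the $\CLP$ problem. Hence cost-optimality is immediate and the only content of the theorem is the three constraint guarantees.

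For those, let $X^i_L \in \{0,1\}$ be the indicator, produced by Step~3, that center $i$ receives label $L$. The input to the rounding is the matrix with rows indexed by $S$, columns by $\labs$, and entries $\pil = \alpha_L$; its row sums equal $1$ and its column sums equal $\alpha_L k$. I would then invoke two properties of the dependent rounding of Gandhi et al.~\cite{gandhi2006dependent} directly: (P1) marginal preservation, $\Pr[X^i_L = 1] = \pil = \alpha_L$, and (P2) column-sum preservation, $\sum_{i \in S} X^i_L \in \{\floor{\alpha_L k},\, \ceil{\alpha_L k}\}$ almost surely. Constraint~\eqref{CLP_3} then follows from (P2) alone: $|S^L| = \sum_{i \in S} X^i_L$ takes a value in $\{\floor{\alpha_L k},\, \ceil{\alpha_L k}\}$ deterministically, so it differs from $\alpha_L k$ by strictly less than $1$, giving at most a unit violation of either endpoint of the window $[(\alpha_L - {\epsilon'}^{C}_{L})k,\, (\alpha_L + \epsilon^{C}_{L})k]$. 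For~\eqref{CLP_2} and~\eqref{CLP_1}, I rewrite $\sum_{i : \la{i} = L} |\Points_i| = \sum_{i \in S} X^i_L |\Points_i|$ and $\sum_{i : \la{i} = L} |\Points^{\pcolor}_i| = \sum_{i \in S} X^i_L |\Points^{\pcolor}_i|$, and apply (P1) with linearity of expectation to obtain $\E[\sum_i X^i_L |\Points_i|] = \alpha_L n$ and $\E[\sum_i X^i_L |\Points^{\pcolor}_i|] = \alpha_L r_h n$. Thus~\eqref{CLP_2} is satisfied in expectation, and the ratio of the two expected quantities equals $r_h$, so both inequalities in~\eqref{CLP_1} hold with equality in expectation.

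The proof is largely bookkeeping once the two properties of dependent rounding are in hand, so there is no single technical obstacle. The one point requiring care is how to interpret ``\eqref{CLP_1} in expectation,'' since this is a ratio (equivalently, after clearing denominators, a pair of bilinear inequalities). The natural reading, and the one supported by the computation above, is that each of the two scalar inequalities forming~\eqref{CLP_1} holds when the random quantities are replaced by their expectations; this is exactly what linearity of expectation delivers.
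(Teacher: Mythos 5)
Your proposal is correct and follows essentially the same route as the paper's proof: optimality from the unconstrained nearest-center assignment, the marginal-probability property of dependent rounding plus linearity of expectation for constraints (\ref{CLP_1}) and (\ref{CLP_2}), and the degree-preservation property for the deterministic guarantee on (\ref{CLP_3}). Your treatment of (\ref{CLP_1}) is in fact slightly more careful than the paper's, which dispatches it with ``a similar argument,'' whereas you make explicit the interpretation of an expected ratio constraint as both numerator and denominator being replaced by their expectations.
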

\begin{proof}
The optimality of the clustering cost follows immediately since each point is assigned to its closest center.
Now, we show that the assignment satisfies all of the constraints. We have $\pil=\alpha_L$ for each center $i$. Now we prove that constraints (\ref{lcul_1},\ref{lcul_2},\ref{lcul_3}) hold in expectation over the assignments $\pild$. Note that $\pild$ is also an indicator random variable for center $i$, taking label $L$. Then we can show that using property \textbf{(A)} of dependent rounding (marginal probability) that:
\begin{align*}
    & \E[\sum_{i\in S:\ell(i)=L} |\Points_i|]  =  \E[\sum_{i\in S} |\Points_i| \pild]  = \sum_{i\in S} |\Points_i| \E[  \pild] \\ 
    & = \sum_{i\in S} |\Points_i| \pil   = \alpha_L \sum_{i\in S} |\Points_i| = \alpha_L n 
\end{align*}
Clearly, constraint (\ref{CLP_2}) is satisfied. 
Through a similar argument we can show that the rest of the constraints also hold in expectation.



We have that $\forall L \in \labs: |S^L|=\sum_{i \in S} \pild = \sum_{i \in S} \alpha_L = \alpha_L k$. By property \textbf{(B)} of dependent rounding (degree preservation) we have $\forall L \in \labs: |S^L| \in \{ \floor{\alpha_L k}, \ceil{\alpha_L k} \}$. Therefore constraint (\ref{CLP_3}) is satisfied in every run of the algorithm at a violation of at most 1.  
\end{proof}

\vspace{-0.2cm}
\begin{algorithm}[h!]
   \caption{Randomized $\LCUL$ Algorithm}
   \label{alg:alg_rand_lcul}
\begin{algorithmic}[1]
   \STATE  Find the assignment $\phi^*$ by assigning each point to its nearest center in $S$. 
   \STATE  For each center $i$, set its probabilistic assignment for label $L$ to $\pil=\alpha_L$. 
   \STATE  Apply dependent rounding \cite{gandhi2006dependent} to probabilistic assignments $\pil$ to get the deterministic assignments $\pild $
\end{algorithmic}
\end{algorithm}

We note that dependent rounding enjoys the \textbf{Marginal Probability} property which means that $\Pr[\pild=1]=\pil$. This enables us to satisfy the constraints in expectation. While we note that letting each center $i$ take label $L$ with probability $\alpha_L$ would also satisfy the constraints in expectation. Dependent rounding also has the \textbf{Degree Preservation} property which implies that $\forall L \in \labs: \sum_{i\in S} \pild \in \{\floor{\sum_{i\in S} \pil}, \ceil{\sum_{i\in S} \pil}\}$ which leads us to satisfy constraint (\ref{CLP_3}) deterministically (in every run of the algorithm) with a violation of at most 1. Further, dependent rounding has the \textbf{Negative Correlation} property which under some conditions leads to a concentration around the expected value. Although, we cannot theoretically guarantee that we have a concentration around the expected value, we observe empirically (section \ref{exp_lcul}) that dependent rounding is much better concentrated around the expected value, especially for constraint (\ref{CLP_2}) for the number of points in each label.

\section{Experiments}\label{sec:experiments}
We run our algorithms using commodity hardware with our code written in Python 3.6  using the \texttt{NumPy} library and functions from the \texttt{Scikit-learn} library \cite{pedregosa2011scikit}. We evaluate the performance of our algorithms over a collection of datasets from the UCI repository \cite{dua2017uci}. For all datasets, we choose specific attributes for group membership and use numeric attributes as coordinates with the Euclidean distance measure. Through all experiments for a color $\pcolor \in \Colors$ with population proportion $r_{\pcolor}=\frac{|\Points^{\pcolor}|}{|\Points|}$  we set the the upper and lower proportion bounds to $l_{\pcolor}=(1-\delta)r_{\pcolor}$ and $u_{\pcolor}=(1+\delta)r_{\pcolor}$, respectively. Note that the upper and lower proportion bounds are the same for both labels. Further, we have $\delta \in [0,1]$, and smaller values correspond to more stringent constraints. In our experiments, we set $\delta$ to 0.1. For both the $\LCAL$ and $\LCUL$ we measure the price of fairness $\POF=\frac{\text{fair solution cost}}{\text{color-blind solution cost}}$ where $\text{fair solution cost}$ is the cost of the fair variant and $\text{color-blind solution cost}$ is the cost of the ``unfair'' algorithm which would assign each point to its closest center. 

We note that since all constraints are proportionality constraints, we calculate the proportional violation. To be precise, for the color proportionality constraint (\ref{lcul_1}), we consider a label $L$ and define $\Delta^L_h \in [0,1]$ where $\Delta^L_h$ is the smallest relaxation of the constraint for which the constraint is satisfied, i.e. the minimum value for which the following constraint is feasible given the solution: $(l^{L}_{\pcolor}-\Delta^L_h)   \sum_{i\in S:\\ \la{i}=L}  |\Points_i|  \leq \sum_{i\in S:\\ \la{i}=L} |\Points^{\pcolor}_i| \leq (u^{L}_{\pcolor}+\Delta^L_h)  \sum_{i\in S:\\ \la{i}=L}  |\Points_i|$, having found $\Delta^L_h$ we report $\Delta_{\text{color}}$ where $\Delta_{\text{color}}=\max_{\{h \in \Colors, l \in \labs\}} \Delta^L_h$. Similarly, we define the proportional violation for the number of points $\Delta^L_{\text{points/label}}$ assigned to a label as the minimal relaxation of the constraint for it to be satisfied. We set $\Delta_{\text{points/label}}$ to the maximum across the two labels. In a similar manner, we define $\Delta_{\text{center/label}}$ for the number of centers a label receives.

We use the $k$-means++ algorithm \cite{arthurk} to open a set of $k$ centers. These centers are inspected and assigned a label. Further, this set of centers and its assigned labels are fixed when comparing to baselines other than our algorithm. 

\paragraph{Clustering Baseline:} In the labeled setting and in the absence of our algorithm, the only alternative that would result in. a fair outcome is a fair clustering algorithm. Therefore we compare against fair clustering algorithms. The literature in fair clustering is vast, we choose the work of \cite{bera2019fair} as it can be tailored easily to this setting in which the centers are open. Further, it allows both lower and upper proportion bounds in arbitrary metric spaces and results in fair solutions at relatively small values of $\POF$ compared to larger $\POF$ (as high as 7) reported in \cite{chierichetti2017fair}. Our primary concern here is not to compare to all fair clustering work, but gauge the performance of these algorithms in this setting. We also compare against the ``unfair'' solution that would simply assign each point to its closest center which we call the nearest center baseline. Though this in general would violate the fairness constraints it would result in the minimum cost. 




\paragraph{Datasets:} 
We use two datasets from the UCI repository: The \adult{} dataset consisting of $32{,}561$ points and the \credit{} dataset consisting of $30{,}000$ points. For the group membership attribute we use race for \adult{} which takes on 5 possible values (5 colors) and  marriage for \credit{} which takes on 4 possible values (4 colors). For the \adult{} dataset we use the numeric entries of the dataset (age, final-weight, education, capital gain, and hours worked per week) as coordinates in the space. Whereas for the \credit{} dataset we use age and 12 other financial entries as coordinates.  

\vspace{-0.2cm}
\subsection{$\LCAL$ Experiments}\label{exp_lcal}
\paragraph{\adult{} \textbf{Dataset}:} After obtaining $k$ centers using the $k$-means++ algorithm, we inspect the resulting centers. In an advertising setting, it is reasonable to think that advertisements for expensive items could be targeting individuals who obtained a high capital gain. Therefore, we choose centers high in the capital gain coordinate to be positive (assign an advertisement for an expensive item). Specifically, centers whose capital gain coordinate is $\ge 1{,}100$ receive a positive label and the remaining centers are assigned a negative one. Such a choice is somewhat arbitrary, but suffices to demonstrate the effectiveness of our algorithm. In real world scenarios, we expect the process to be significantly more elaborate with more representative features available. We run our algorithm for $\LCAL$ as well as the fair clustering algorithm as a baseline. Figure \ref{adult_all_lcal} shows the results. It is clear that our algorithm leads to a much smaller $\POF$ and the $\POF$ is more robust to variations in the number of clusters. In fact, our algorithm can lead to a $\POF$ as small as 1.0059 ($0.59\%$) and very close to the unfair nearest center baseline whereas fair clustering would have a $\POF$ as large as 1.7 ($70\%$). Further, we also see that the unlike the nearest center baseline, fair labeled clustering has no proportional violations just like fair clustering. 

Here for the $\LCAL$ setting, we compare to the optimal (fairness-agnostic) solution where each point is simply routed to its closest center regardless of color or label. We use the same setting at that from section \ref{sec:experiments}. We set $\delta=0.1$ and measure the $\POF$. Since the (fairness-agnostic) solution does not consider the fairness constraint we also measure its proportional violations. Figures 6 and 7 show the results over the \adult{} and \credit{} datasets. We can clearly see that although the (fairness-agnostic) solution has the smallest cost it has large color violation. We also see that our algorithm unlike fair clustering achieves fairness but at a much lower $\POF$.

\vspace{-.15cm}
\begin{figure}[h!]
\centering
\includegraphics[width=0.6\linewidth]{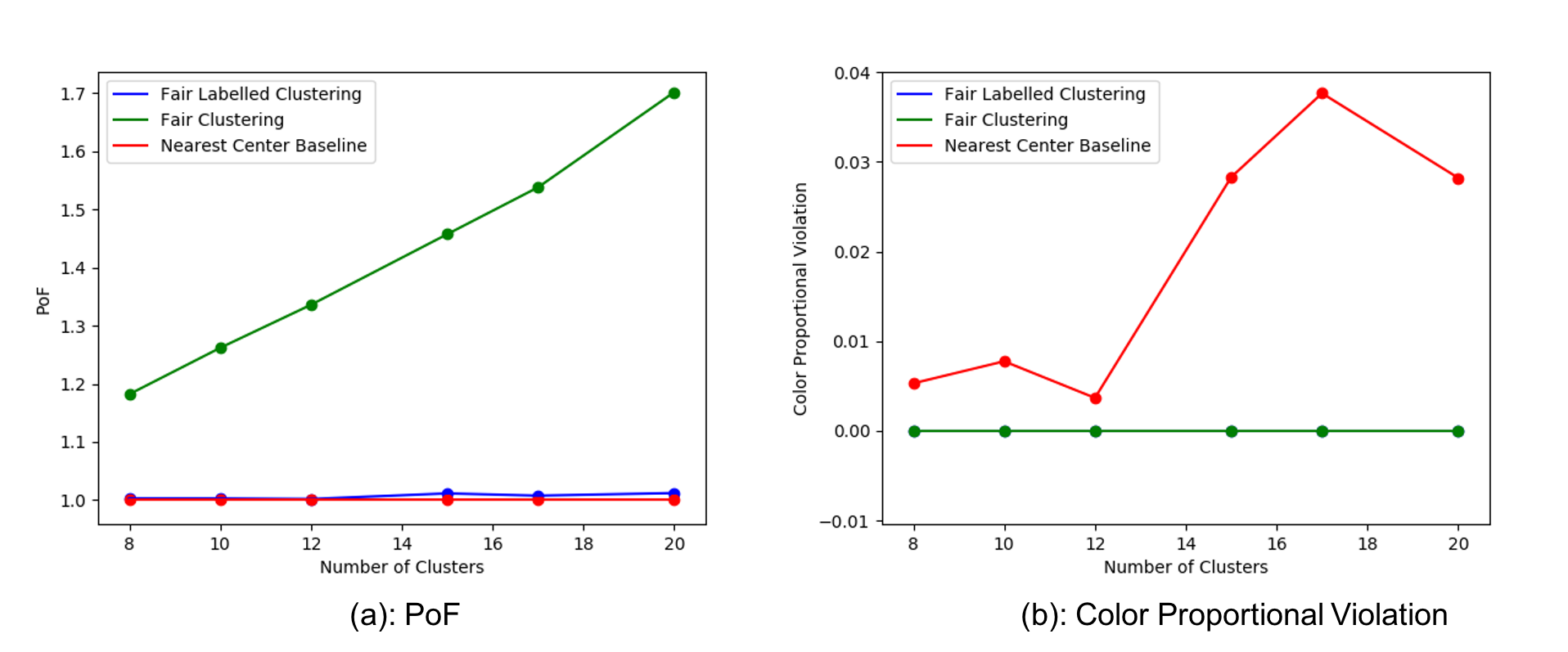}
\caption{\adult{} dataset results (a):$\POF$,  (b):$\Delta_{\text{color}}$}\label{adult_all_lcal}
\end{figure}

\vspace{-0.4cm}
\paragraph{\credit{} \textbf{Dataset}:} Similar to the \adult{} dataset experiment, after finding the centers using $k$-means++, we assign them positive and negative labels. For similar motivations, if the center has a coordinate corresponding to the amount of balance that is $\ge 300{,}000$ we assign the center a positive label and a negative one otherwise. Figure \ref{credit_all_lcal} shows the results of the experiments. We see again that our algorithm leads to a lower price of fairness than fair clustering, but not to the same extent as in the \adult{} dataset but it still has no proportional violation just like fair clustering.

\begin{figure}[h!]
\centering
\includegraphics[width=0.6\linewidth]{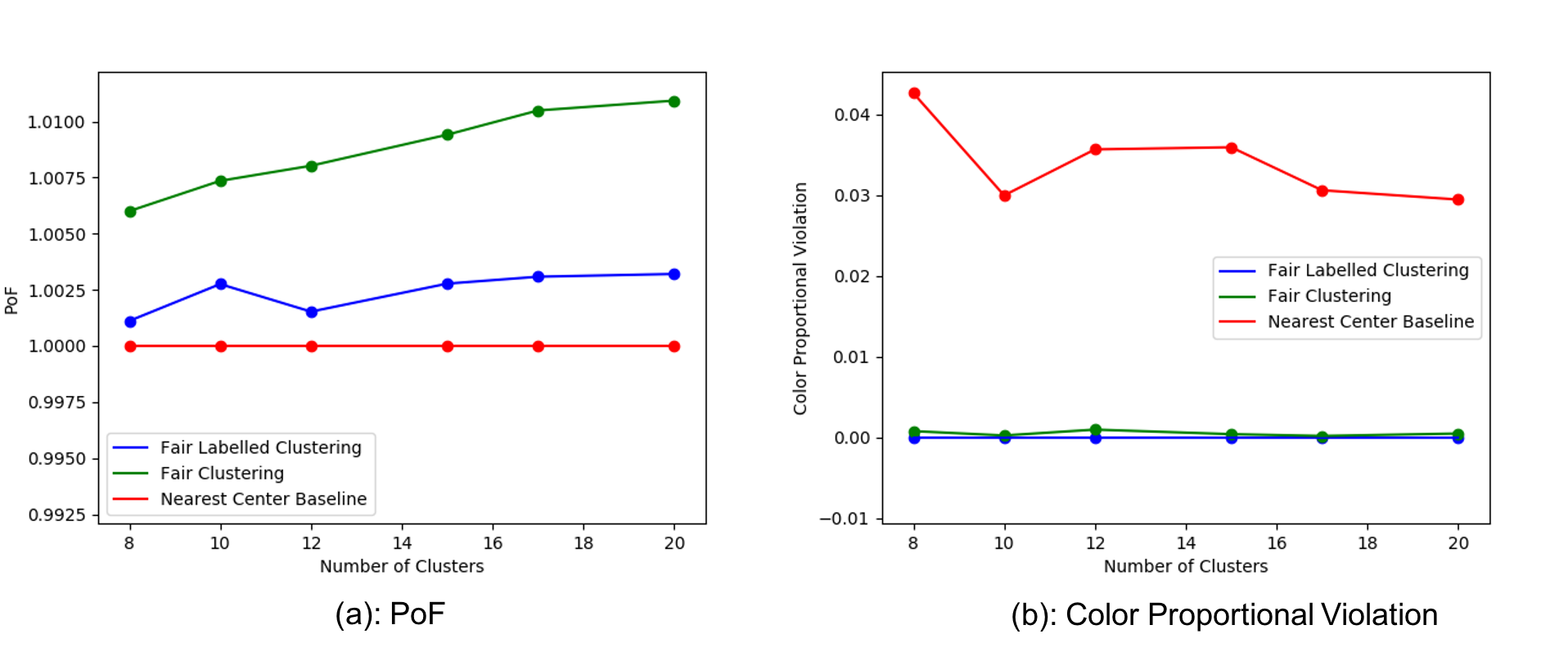}
\caption{\credit{} dataset results (a):$\POF$,  (b):$\Delta_{\text{color}}$}\label{credit_all_lcal}
\end{figure}

As mentioned in section \ref{sec:lcal_fast_two_label_alg}, algorithm (\ref{alg:alg_exact_k_med_means}) can allow the user to obtain the solutions for different values of $|\phi^{-1}(P)|$ (the number of points assigned to the positive set) without an asymptotic increase in the running time. In figure \ref{plot_vs} we show a plot of $|\phi^{-1}(P)|$ vs the clustering cost. Interestingly, requiring more points to be assigned to the positive label comes at the expense of a larger cost for some instances (\adult{} with $k=15$) whereas for others it has a non-monotonic behaviour (\adult{} with $k=10$). This can perhaps be explained by the different choices of centers as $k$ varies. There are $5$ centers with positive labels for $k=10$ ($50\%$ of the total), but only $4$ for $k=15$ (less than $30\%$) making it difficult to route points to positive centers.  

\begin{figure}[h!]
\centering
\includegraphics[width=0.6\linewidth]{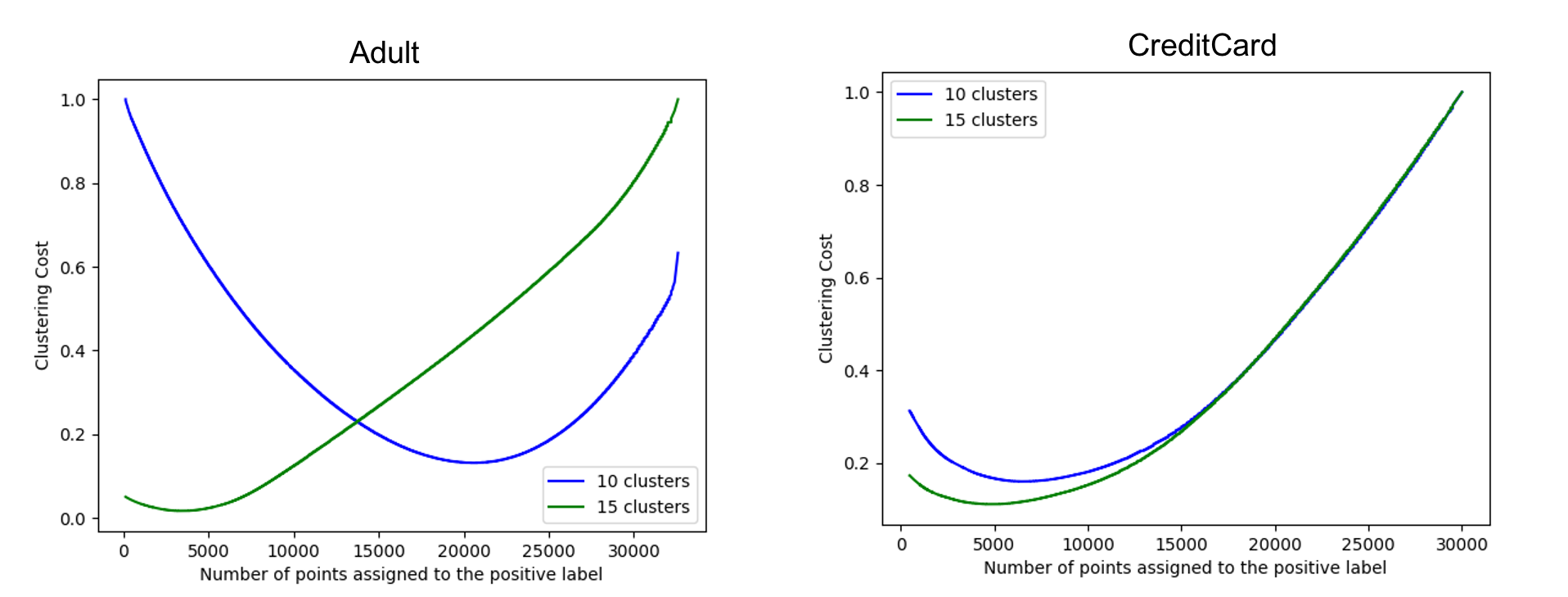}
\caption{A plot of $|\phi^{-1}(P)|$ vs the clustering cost (normalized by the maximum cost obtained).}\label{plot_vs}
\end{figure}




\vspace{-0.1cm}
\subsection{$\LCUL$ Experiments}\label{exp_lcul}
Similar to the $\LCAL$ setting for $\LCUL$ we get the centers by running $k$-means++. However, we do not have the labels. We compare our algorithm (algorithm \ref{alg:alg_rand_lcul}) to two baselines: (1) Nearest Center with Random Assignment (\NCRA{}) and (2) Fair Clustering (\FC{}). We refer to our algorithm (block \ref{alg:alg_rand_lcul}) as \LFC{} (labeled fair clustering). In \NCRA{} we assign each point to its closest center which leads to an optimal clustering cost, whereas for fair clustering (\FC{}) we solve the fair clustering problem. For both \NCRA{} and \FC{} we assign each center label $L$ with probability $\alpha_L$.

We use two labels with $\alpha_1=\frac{1}{4}$ and $\alpha_2=\frac{3}{4}$. For all colors and labels we set $\epsilon^A_{h,L}={\epsilon'}^A_{h,L}=0.2$ and for all labels we set $\epsilon^B_{L}={\epsilon'}^B_{L}={\epsilon}^C_{L}={\epsilon'}^C_{L}=0.1$. Further, all algorithms satisfy the constraints in expectation, therefore we seek a measure of centrality around the expectation like the variance. Each algorithm is ran 50 times and we report the average values of $\Delta_{\text{color}}$,$\Delta_{\text{points/label}}$, and $\Delta_{\text{center/label}}$. 

\vspace{-0.3cm}
\begin{figure}[h!]
\centering
\includegraphics[width=0.5\linewidth]{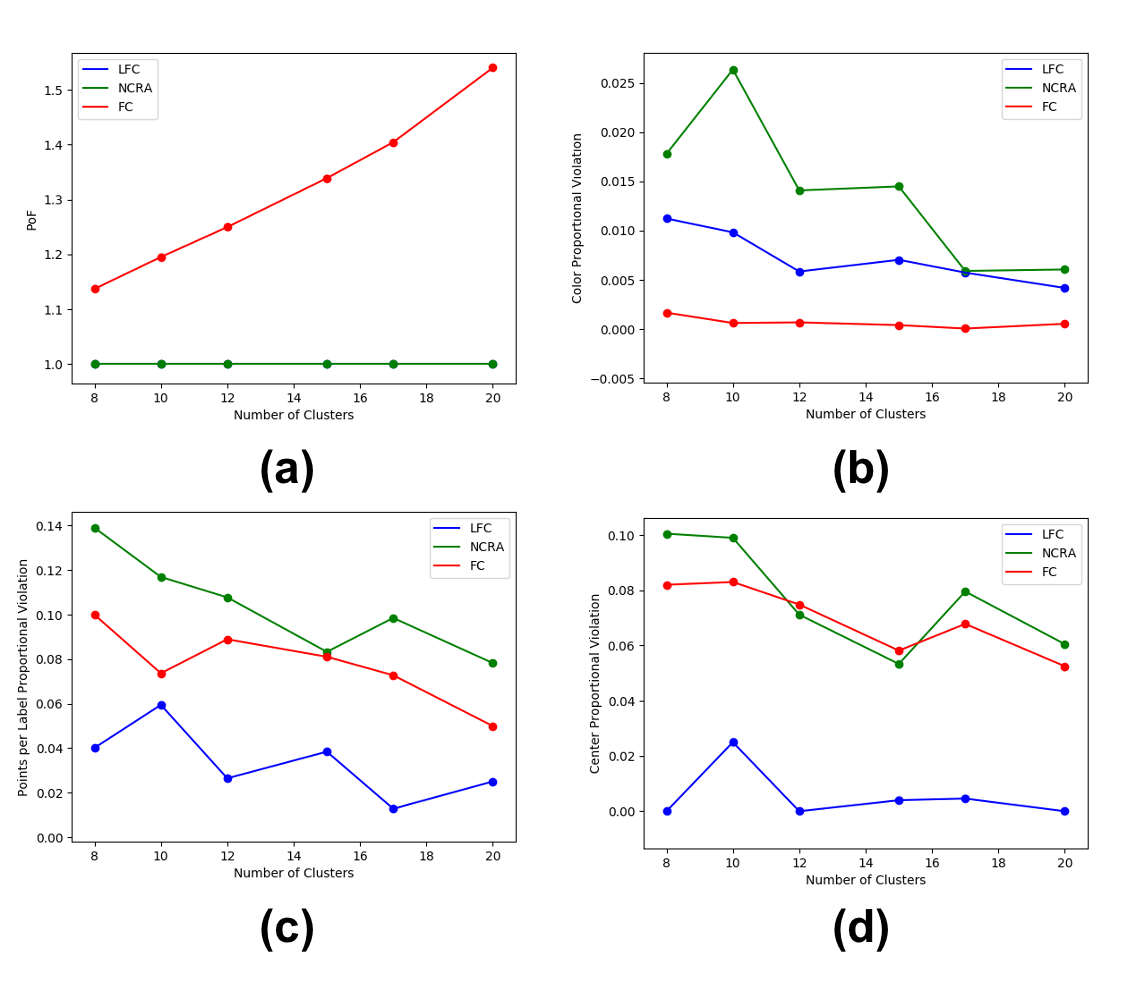}
\caption{$\LCUL$ results on the \adult{} dataset. (a):$\POF$,  (b):$\Delta_{\text{color}}$, (c):$\Delta_{\text{points/label}}$ ,(d):$\Delta_{\text{center/label}}$.}\label{adult_clul_fig}
\end{figure}

\vspace{-0.7cm}
\begin{figure}[h!]
\centering
\includegraphics[width=0.5\linewidth]{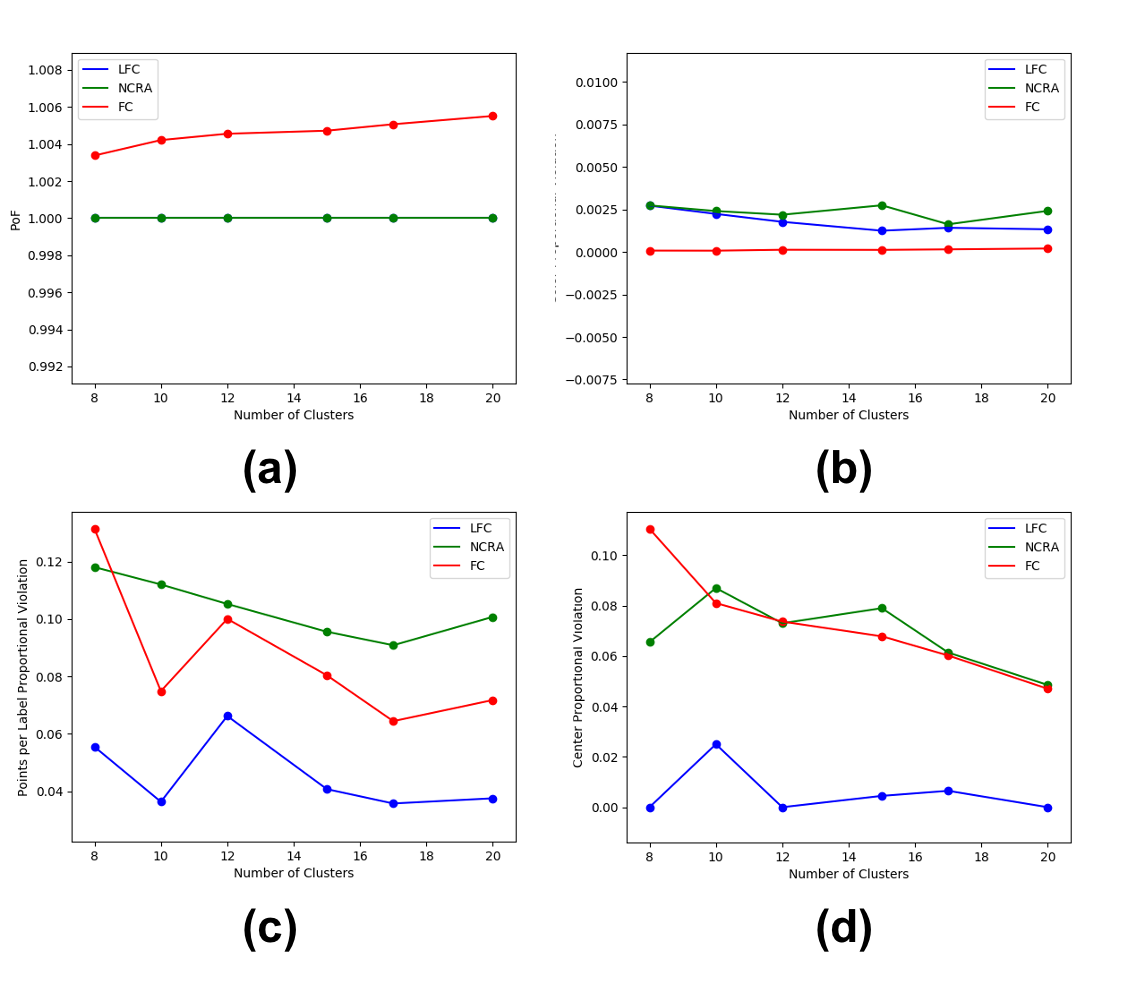}
\caption{$\LCUL$ results on the \credit{} dataset. (a):$\POF$,  (b):$\Delta_{\text{color}}$, (c):$\Delta_{\text{points/label}}$ ,(d):$\Delta_{\text{center/label}}$.}\label{credit_clul_fig}
\end{figure}
Figures \ref{adult_clul_fig} and \ref{credit_clul_fig} show the results for \adult{} and \credit{}. For $\POF$, our algorithm achieves an optimal clustering and hence coincides with \NCRA{} whereas fair clustering achieves a much higher $\POF$ as large as $1.5$. For the color proportionality ($\Delta_{\text{color}}$), we see that fair clustering has almost no violation whereas the \NCRA{} and labeled clustering have small but noticeable violations. For the number of points a label receives ($\Delta_{\text{points/label}}$) we notice that all algorithms have a violation although labeled clustering has a smaller violation mostly. As noted earlier, we suspect that this is a result of dependent rounding's negative correlation property leading to some concentration around the expectation. Finally, for the number of centers a label receives ($\Delta_{\text{center/label}}$), clearly \LFC{} has a much lower violation.

\vspace{-0.25cm}
\subsection{Algorithm Scalability}\label{appendix_scale}
Here we investigate the scalability of our algorithms. In particular, we take the \cens{} dataset which consists of 2,458,285 points and sub-sample it to a specific number, each time we find the centers with the $k$-means algorithm\footnote{We choose $k=5$ for all different dataset sizes.}, assign them random labels, and solve the $\LCAL$ and $\LCUL$ problems. Note since we care only about the run-time a random assignment of labels should suffice. Our group membership attribute is gender which has two values (two colors). We find our algorithm are indeed highly scalable (figure \ref{runtime_fig}) and that even for 500,000 points it takes less than 90 seconds. We note in contrast that the fair clustering algorithm of \cite{bera2019fair} would takes around 30 minutes to solve a similar size on the same dataset. In fact, scalability is an issue in fair clustering and it has instigated a collection of work such as \cite{huang2019coresets,backurs2019scalable}. The fact that our algorithm performs relatively well run-time wise is worthy of noting. 

\vspace{-0.2cm}
\begin{figure}[H]
\centering
\includegraphics[width=0.7\linewidth]{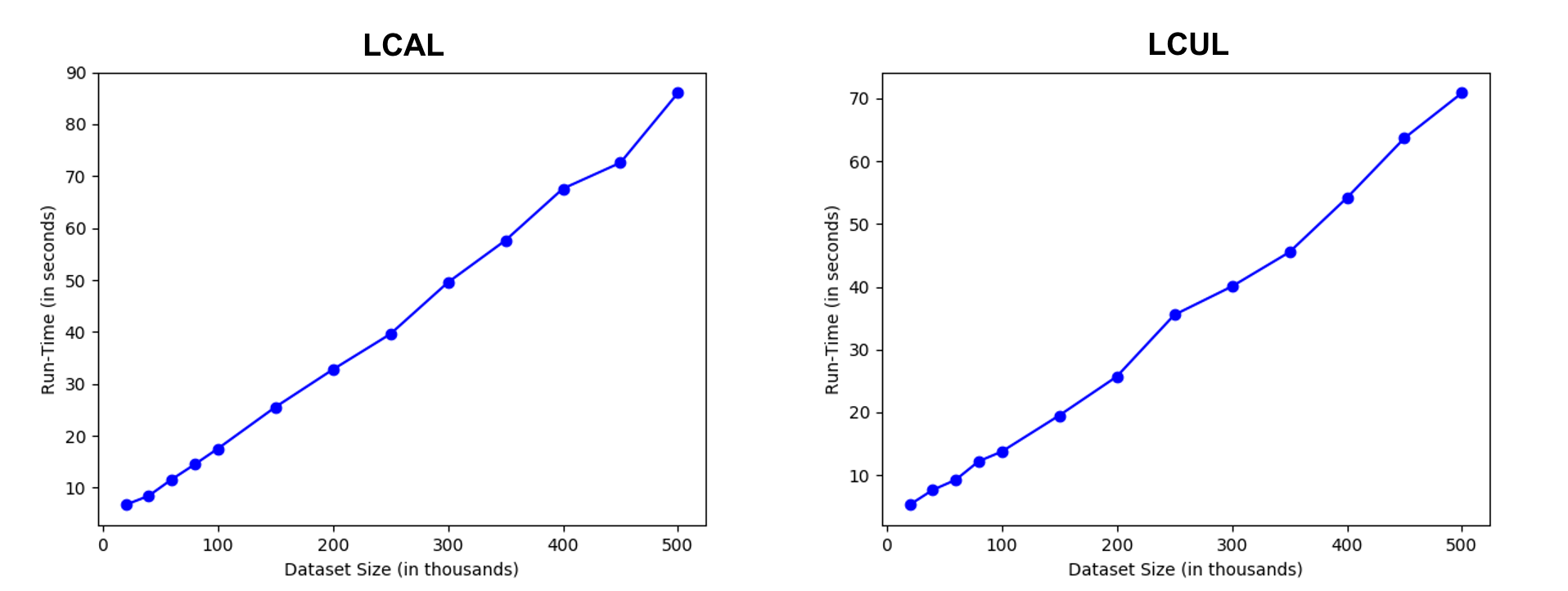}
\caption{Dataset size vs algorithm Run-Time: (left) $\LCAL$, (right) $\LCUL$.}\label{runtime_fig}
\end{figure}





\section{Conclusion}
Motivated by fairness considerations and the quality of outcome each cluster receives, we have introduced fair labeled clustering. We showed algorithms for the case where the centers' labels are decided and have shown that unlike fair clustering we end up with a much lower cost while still satisfying the fairness constraints. For the case where the centers' labels are not decided we gave a detailed characterization of the complexity and showed an algorithm for a special case. Experiments have shown that our algorithms are scalable and much faster than fair clustering. 
\section{Acknowledgments}   
This research was supported in part by 
NSF CAREER Award IIS-1846237,  
NSF Award CCF-1749864,  
NSF Award CCF-1852352,  
NSF Award SMA-2039862,  
NIST MSE Award \#20126334,  
DARPA GARD \#HR00112020007,  
DoD WHS Award \#HQ003420F0035,  
DARPA SI3-CMD \#S4761,  
ARPA-E DIFFERENTIATE Award \#1257037,  
and gifts by research awards from
Adobe, 
Amazon, 
and Google.  
%


\bibliographystyle{ACM-Reference-Format}
\bibliography{refs}

\clearpage
\newpage 
\appendix
\section{Omitted Proofs}\label{app_missing_proofs}
We note that all of our hardness results use the $k$-center problem for simplicity. Before we introduce the hardness result, we note all of our reductions are from exact cover by 3-sets (X3C) \cite{garey1979computers} where we have universe $\mathcal{U}=\{u_1,u_2,\dots,u_{3q}\}$ and subsets $\mathcal{W}_1,\dots,\mathcal{W}_t$ where $t=q+r$ and for non-trivial instances $r>0$. We form an instance of $\LCUL$ by representing each one the subsets $\mathcal{W}_1,\dots,\mathcal{W}_t$ by a vertex and each element in $\mathcal{U}=\{u_1,u_2,\dots,u_{3q}\}$ by a vertex. The centers are the sets $\mathcal{W}_1,\dots,\mathcal{W}_t$ and they are given a blue color whereas the rest of the points (in $\mathcal{U}$) are red. Further, each point $u_i$ is connected by a edge to a center $\mathcal{W}_i$ if and only if $u_i \in \mathcal{W}_j$. The distances between any two points is the length of the shortest path between them. This clearly leads to a metric. See figure \ref{hardness_fig_1} for an example. This is essentially a reduction we follow in all proofs, sometimes changes are introduced and mentioned explicitly in the proofs. 

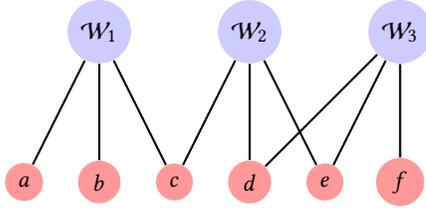
\begin{figure}[h!] 
\begin{tikzpicture}
\tikzstyle{level1}= [circle,fill=blue!20]
\tikzstyle{level2}= [circle,fill=red!40]
\tikzstyle{edge} = [-, thick]
\node[level1] (w1) at (0,4) {$\mathcal{W}_1$};
\node[ level1] (w2) at (2,4) {$\mathcal{W}_2$};
\node[level1 ] (w3) at (4,4) {$\mathcal{W}_3$};
\node[level2 ] (a) at (-1,2) {$a$};
\node[level2] (b) at (0,2) {$b$};
\node[level2 ] (c) at (1,2) {$c$};
\node[level2 ] (d) at (2,2) {$d$};
\node[level2 ] (e) at (3,2) {$e$};
\node[level2] (f) at (4,2) {$f$};

\draw[edge] (w1)--(a);
\draw[edge] (w1)--(b);
\draw[edge] (w1)--(c);
\draw[edge] (w2)--(c);
\draw[edge] (w2)--(d);
\draw[edge] (w2)--(e);
\draw[edge] (w3)--(d);
\draw[edge] (w3)--(e);
\draw[edge] (w3)--(f);

\end{tikzpicture}
\caption{Example of the reduction for theorem (\ref{hardness_1}). This is an instance of the LUCL problem for an instance $U= \{a,b,c,d,e,f\}$, $\mathcal{W}_1 = \{a,b,c\}$, $\mathcal{W}_2= \{c,d,e\}$ and $\mathcal{W}_3=\{d,e,f\}$  with $q =2$, $|U|=3q$ and $t=3$. }
\label{hardness_fig_1}
\end{figure}

Now we introduce the following theorem:
\begin{theorem}\label{hardness_a}
Even if the color-proportionality constraint (\ref{lcul_1}) are ignored\footnote{We can simply remove the constraint or set $l^{L}_{\pcolor}=0,u^{L}_{\pcolor}=1, \forall \pcolor \in \Colors, L \in \labs$.} $\LCUL$ is NP-hard. 
\end{theorem}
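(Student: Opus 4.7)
My plan is to reduce from Exact Cover by 3-Sets (X3C) using the bipartite construction already described before the theorem: given an X3C instance with universe $\mathcal{U}=\{u_1,\dots,u_{3q}\}$ and $t$ subsets $\mathcal{W}_1,\dots,\mathcal{W}_t$ (with $t=q+r$), I build an $\LCUL$ instance whose vertices are the $\mathcal{W}_j$'s together with the $u_i$'s, with edges connecting $u_i$ to $\mathcal{W}_j$ iff $u_i \in \mathcal{W}_j$, shortest-path distances, center set $S=\{\mathcal{W}_1,\dots,\mathcal{W}_t\}$, and label set $\labs=\{P,N\}$. Since constraint (\ref{lcul_1}) is dropped, the coloring is irrelevant and I use the $k$-center objective.

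Next, I encode the X3C structure using only constraints (\ref{lcul_2}) and (\ref{lcul_3}). I set $\cl{P}=\cu{P}=q$ and $\cl{N}=\cu{N}=t-q$, which forces exactly $q$ centers to receive label $P$; and $\lb{P}=\ub{P}=4q$, $\lb{N}=\ub{N}=t-q$, which forces label $P$ to cover exactly $4q$ points and label $N$ to cover the remaining $t-q$ points. All values lie in the permitted integer ranges.

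The main equivalence I will establish is: the X3C instance is a YES-instance iff the optimal $k$-center cost is at most $1$. The forward direction is by direct construction: label the $q$ cover subsets $P$ and the rest $N$, self-assign every center (distance $0$), and route each $u_i$ to the unique $P$-labeled subset containing it (distance $1$). All constraints are verified by inspection and the objective equals $1$. For the reverse direction, if the cost is $\le 1$, then no center can be assigned to a different center (center-to-center distances are $\ge 2$ in the bipartite graph), so every center is self-assigned; combined with $\ub{N}=t-q$ this uses up the entire label-$N$ budget with just the $N$-centers themselves, so all $3q$ universe points must be assigned to $P$-centers, each along a direct edge, which by definition means the $q$ chosen $P$-centers form an exact cover.

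The main obstacle I foresee is verifying a sufficient gap for NP-hardness: I need to argue that a NO-instance forces a strictly larger optimum so the decision version remains hard. For this I will use that in the bipartite construction any shortest path from a universe element to a non-adjacent subset has length at least $3$ (it must take the form $u_i\to\mathcal{W}_k\to u_l\to\mathcal{W}_j$), so in any assignment satisfying the budget constraints on a NO-instance some $u_i$ is routed to a $P$-center that does not contain it, pushing the objective to at least $3$. The $1$-vs-$3$ gap gives NP-hardness of the decision problem, and the reduction is clearly polynomial-time.
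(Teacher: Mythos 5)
Your reduction is correct and is essentially the paper's own proof: the same bipartite X3C gadget with subsets as centers, the same two-label setup, and the same ``cost $1$ iff exact cover'' equivalence. The only differences are cosmetic---you pin the budgets down with tight equalities ($\cl{P}=\cu{P}=q$, $\lb{N}=\ub{N}=t-q$) and make the $1$-versus-$3$ gap explicit, whereas the paper uses $\cu{P}=q$ and $\lb{P}=4q$ with the remaining bounds left slack---so no gaps to report.
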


\begin{proof}
As mentioned we consider an instance of exact cover by 3-sets (X3C) with universe $\mathcal{U}=\{u_1,u_2,\dots,u_{3q}\}$ and subsets $\mathcal{W}_1,\dots,\mathcal{W}_t$. We construct an instance of $\LCUL$ where the proportionality constraints are ignored. Further, we only have two labels $\labs=\{N,P\}$, we set $\cl{P}=0,\cu{P}=q$,$\cl{N}=0,\cu{N}=t$ and $\lb{P}=4q,\ub{P}=3q+t$, $\lb{N}=0,\ub{N}=3q+t$. 

A solution for X3C leads to a solution for $\LCUL$ at cost $1$: Take the collection of $q$ many subsets that solve X3C and give their corresponding centers in $\LCAL$ a positive label. Then it is clear that $|S^P|=q$ and that the number of points covered by the positive centers is $4q$ and that this done at a cost of 1. The centers that do not correspond to the solution of X3C will be given a negative label and assigned no points. 

A solution for $\LCUL$ at cost $1$ leads to a solution X3C: A solution for $\LCUL$ cannot assign more than $\cu{P}=q$ many centers a positive label and it has to cover $3q$ more points to have a total of $4q$ points and this has to be done at a distance of 1. By construction, since each center is connected to 3 points, the $\LCUL$ solution cannot have less than $q$ centers. Further, to have $4q$ points, then each center would have to cover a unique set of 3 points at a distance of 1. Since points are connected to centers at a distance of 1 only if they are corresponding values are contained in the subsets corresponding to those centers, it follows that the $q$ subsets in the $\LCUL$ solution are indeed an exact cover for X3C.  

\end{proof}

Here we instead we ignore the constraints on the number of points a label should receive, i.e. constraints (\ref{lcul_2} and keep the proportionality constraints. We show that this also results in an NP-hard problem as demonstrated in the theorem below:
\begin{theorem}\label{hardness_b}
Even if we do not specify the number of points a label should receive (constraint(\ref{lcul_2})), $\LCUL$ is NP-hard.   
\end{theorem}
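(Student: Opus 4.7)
The plan is to adapt the X3C reduction from Theorem~\ref{hardness_a}, reusing its bipartite-style graph: each subset $\mathcal{W}_j$ becomes a blue center (and a point), each element $u_i$ becomes a red point, there is a weight-$1$ edge between $u_i$ and $\mathcal{W}_j$ iff $u_i\in \mathcal{W}_j$, and all other distances are the corresponding shortest paths. We keep two labels $\labs=\{P,N\}$ but this time use only constraints (\ref{lcul_1}) and (\ref{lcul_3}), dropping constraint (\ref{lcul_2}).

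Concretely, I would set $\cl{P}=\cu{P}=q$ and $\cl{N}=\cu{N}=r=t-q$, so constraint (\ref{lcul_3}) forces exactly $q$ centers to receive the positive label. For the color proportionality on the positive label, set $l^{P}_{\text{red}}=u^{P}_{\text{red}}=3/4$ (equivalently $l^{P}_{\text{blue}}=u^{P}_{\text{blue}}=1/4$), and leave the negative label unconstrained with $l^{N}_{h}=0$, $u^{N}_{h}=1$ for both colors. The key geometric observation is that any two non-adjacent vertices in this construction are at distance at least $2$, so at clustering cost $1$ every blue vertex (each of which is both a point and a center) must be assigned to itself, and every red point must be assigned to a center whose corresponding subset contains it.

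The equivalence then proceeds in two directions. Given an exact cover of $q$ subsets, label the corresponding centers positive and the remaining $r$ negative; assign each red point to some positive center containing it and each blue vertex to itself. This yields cost $1$, and the positive cluster contains $q$ blue and $3q$ red points, hitting the tight ratio $3/4$. Conversely, any cost-$1$ $\LCUL$ solution self-assigns all blue vertices, so the positive cluster already contains exactly $q$ blue points (the positive centers themselves); the tight equation $l^{P}_{\text{red}}=u^{P}_{\text{red}}=3/4$ then forces exactly $3q$ red points into the positive cluster. Since only $3q$ red points exist in total, every red point lands in positive, each adjacent to a positive center, and the $q$ chosen subsets form an exact cover of $\mathcal{U}$.

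The main obstacle I expect is ruling out alternative cost-$1$ solutions enabled by the absence of constraint (\ref{lcul_2}), in particular assignments that spread red points across both labels or reroute blue vertices between labels. Both possibilities are blocked: the first by combining the tight positive-label red proportion with the total count of $3q$ red points, and the second by the distance-$\geq 2$ lower bound for non-adjacent pairs, which forces all blue self-assignments. Making these two accounting arguments airtight, together with the standard check that the construction is polynomial in the X3C input size, is the crux of the proof.
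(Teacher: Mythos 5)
Your reduction is correct and is essentially the paper's own proof: the same X3C gadget with blue subset-centers and red element-points, two labels, proportionality enforced only on the positive label, and constraint (\ref{lcul_3}) retained in place of (\ref{lcul_2}). The only cosmetic difference is that you pin the positive label down with tight bounds ($\cl{P}=\cu{P}=q$ and $l^{P}_{\text{red}}=u^{P}_{\text{red}}=3/4$), whereas the paper uses only the lower bounds $\cl{P}=q$ and $l^{P}_{\text{red}}=3/4$ and recovers ``exactly $q$ positive centers covering all $3q$ red points'' via the same counting argument you sketch (more than $q$ self-assigned blue centers would demand more than $3q$ red points, which do not exist).
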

\begin{proof}
Similar to the proof of theorem (\ref{hardness_a}) we follow the reduction from X3C with two labels for $\LCUL$, i.e. $\labs=\{N,P\}$, but now we consider the color of the vertices. Vertices of the subsets $\mathcal{W}_1,\dots,\mathcal{W}_t$ are blue and all of the vertices of the elements of $\mathcal{U}$ are red. For the $\LCUL$ instance, we set $\cl{P}=q,\cu{P}=t$,$\cl{N}=0,\cu{N}=t$. The representation for the negative set is ignored, i.e. $l^{N}_{\text{red}}=l^{N}_{\text{blue}}=0$ and $u^{N}_{\text{red}}=u^{N}_{\text{blue}}=1$. For the positive set, we only have set a bound on the lower proportion for the red color, specifically $l^{P}_{\text{red}}=\frac{3}{4}, u^{P}_{\text{red}}=1$ and $l^{P}_{\text{blue}}=0,u^{P}_{\text{blue}}=1$. As the reduction of theorem (\ref{hardness_a}) the optimal value of the $k$-center objective cannot be less than 1. 

A solution for X3C leads to a solution for $\LCUL$ at cost $1$: Take the $q$ subsets in the solution of X3C and assign their corresponding centers a positive labels, then $|S^P|=q\ge \cu{P}$. Further since elements of $\mathcal{U}$ are represented by red vertices, you will have $3q$ red vertices covered at a distance of 1, the red proportion of the positive label would be $\frac{3q}{4q}=\frac{3}{4}\ge l^{P}_{\text{red}}$. To complete the solution assign the rest of the centers a negative label.

A solution for $\LCUL$ at cost $1$ leads to a solution X3C: A solution for $\LCUL$ would have to choose at least $\cl{P}=q$ many centers. Since all centers are blue and because there are only $3q$ many red points in the graph, we would have to choose exactly $q$ centers and cover all of the $3q$ many red points to satisfy the color proportionality constraints of $l^{P}_{\text{red}}$. Since this is being done at a cost of 1, these points must be representing elements in $\mathcal{U}$ that are contained in the subsets corresponding to the selected centers. Further, since every center is connected to exactly 3 points at radius 1, we have found an exact cover.  
\end{proof}

\begin{theorem}\label{hardness_c}
Even if we do not specify the number of centers of each label (ignoring constraints (\ref{lcul_3}) ), $\LCUL$ is NP-hard.  
\end{theorem}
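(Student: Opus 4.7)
The plan is to reuse the X3C reduction from Theorems \ref{hardness_a} and \ref{hardness_b}: build the graph where every subset $\mathcal{W}_1,\dots,\mathcal{W}_t$ is a blue vertex (also a center in $S$), every element $u_i \in \mathcal{U}$ is a red vertex, and an edge of length $1$ connects $u_i$ to $\mathcal{W}_j$ iff $u_i \in \mathcal{W}_j$, with all other distances given by shortest paths. As in the earlier proofs, a $k$-center cost of at most $1$ forces each blue center to be assigned to itself (any other center is at distance $\geq 2$) and each red point to a connected center. The twist is that constraint (\ref{lcul_3}) is unavailable, so we cannot directly cap $|S^P|$ at $q$; we must force this indirectly through the per-label size bounds (\ref{lcul_2}) and color proportionality (\ref{lcul_1}).

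The key observation is that, since every blue center labeled $L$ contributes itself as a blue point to $\phi^{-1}(L)$ at cost $0$, the number of blue points assigned to label $L$ equals $|S^L|$ in any solution of cost $\leq 1$. I would therefore instantiate $\LCUL$ with $\labs=\{P,N\}$ and set
\[
\lb{P}=\ub{P}=4q, \qquad \lb{N}=\ub{N}=r,
\]
together with the color-proportion bounds $l^{P}_{\text{red}}=u^{P}_{\text{red}}=\tfrac{3}{4}$ (so the positive label must contain exactly $3q$ red and $q$ blue points), and leave the negative label unconstrained in color ($l^{N}_{h}=0$, $u^{N}_{h}=1$ for both colors). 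The combination $\lb{P}=\ub{P}=4q$ and $l^{P}_{\text{red}}=u^{P}_{\text{red}}=\tfrac{3}{4}$ forces exactly $q$ blue points into $\phi^{-1}(P)$, hence $|S^P|=q$, even though no explicit constraint on $|S^P|$ is present.

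For correctness, an exact cover $\mathcal{W}_{i_1},\dots,\mathcal{W}_{i_q}$ of $\mathcal{U}$ yields a cost-$1$ solution by labeling these centers positive (each covers its three red neighbors), labeling the remaining $r$ centers negative, and checking that all constraints hold by construction. Conversely, a cost-$1$ $\LCUL$ solution has $|S^P|=q$ by the argument above, and the $q$ positive centers together must cover $3q$ distinct red vertices at distance $1$; since each center is connected to exactly $3$ red vertices, the chosen subsets must be pairwise disjoint and jointly cover $\mathcal{U}$, giving an exact cover.

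The main obstacle I anticipate is just the bookkeeping needed to verify that (i) no feasible solution can achieve cost $<1$ (trivial, since all nonzero distances are $\geq 1$) and (ii) no feasible solution at cost $>1$ can satisfy the constraints in a way that bypasses the exact-cover structure; the latter is ruled out because the size and proportion constraints are \emph{equalities} that pin down $|S^P|$ and the red coverage identically in any feasible solution, independently of the cost achieved. Beyond this, the argument is parallel to Theorems \ref{hardness_a} and \ref{hardness_b} and requires no new ideas.
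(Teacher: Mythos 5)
Your proposal is correct and follows essentially the same route as the paper's proof: the same X3C reduction with exact size bounds $\lb{P}=\ub{P}=4q$ and exact color proportions ($3/4$ red, $1/4$ blue) on the positive label, which together force exactly $q$ blue centers to be labeled positive and thus recover the missing cap on $|S^P|$. The only cosmetic difference is that you set $\lb{N}=\ub{N}=r$ where the paper leaves the negative label's size bounds trivial; these are equivalent since the positive-label equality already pins down the negative label's size.
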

\begin{proof}
Similar to theorems (\ref{hardness_a},\ref{hardness_b}) we follow the same reduction from X3C. This time we ignore constraint (\ref{lcul_3}) on the number of centers, i.e. $0 \leq |S^N|,|S^P| \leq k$. We set $\lb{P}=\ub{P}=4q$ and $\lb{N}=0, \lb{N}=n$. Further for the color proportionality constraints, we have for the positive set we set $l^{P}_{\text{red}}=u^{P}_{\text{red}}=\frac{3}{4}$,  $l^{P}_{\text{blue}}=u^{P}_{\text{blue}}=\frac{1}{4}$ and for the negative set we have $l^{N}_{\text{red}}=l^{N}_{\text{blue}}=0$. $u^{N}_{\text{red}}=u^{N}_{\text{blue}}=1$.

A solution for X3C leads to a solution for $\LCUL$ at cost $1$: Simply let the subsets (centers) in the solution if X3C have a positive label and assign all of the points in $\mathcal{U}$ to them. Clearly, we have $\lb{P}=\ub{P}=4q$ and the red color has a representation of $\frac{3}{4}$ and the blue has a representation of $\frac{1}{4}$. Furthe, this is done at an optimal cost of 1.

A solution for $\LCUL$ at cost $1$ leads to a solution X3C: Since $\lb{P}=\ub{P}=4q$, $l^{P}_{\text{red}}=u^{P}_{\text{red}}=\frac{3}{4}$, and $l^{P}_{\text{blue}}=u^{P}_{\text{blue}}=\frac{1}{4}$, it follows that the positive set should cover $\frac{3}{4}4q=3q$ many red points and that it must also cover $\frac{1}{4}4q=q$ many blue points. Since all blue points are centers and all red points are from $\mathcal{U}, $it follows that we have to choose $q$ many centers to cover $3q$ many points at an optimal cost of 1. This leads to a solution for X3C. 
\end{proof}

Now we re-state the original theorem from the main paper:
\thmhardnessone*

\begin{proof}
This follows immediately from theorems (\ref{hardness_a},\ref{hardness_b},\ref{hardness_c}) above.
\end{proof}

\hardnessfour*
\begin{proof}
This follows simply by noting that if the labels are assigned, then we have an $\LCAL$ instance which solvable in time that is polynomial in $n$ and $k$, since $k\leq n$, it follows that the run time for solving $\LCAL$ is $O(n^c)$ for some constant $c$. Now, since there are at most $m^k$ many label choices for the centers, it follows that the run time is for  $\LCUL$ is $O(m^k n^c)$.  
\end{proof}

\hardnessd*
\begin{proof}
Let us consider the color proportionality constraint (\ref{lcul_1}) alone. To solve the problem optimally and satisfy the constraint, simply assign all points to their closest center and let all centers take one label from the set $\labs$. 

Now, we consider only the constraints on the number of centers for each label (\ref{lcul_3}). Again we assign each point to its closest center for an optimal cost. To satisfy constraints (\ref{lcul_3}), assuming the constraint parameters of (\ref{lcul_3}) lead to a feasible problem, then each label $L \in \labs$, assign it $\cl{L}$ many centers arbitrarily. If some centers have not been assigned any labels, then simply go to label $L$ which has not reached its upper bound $\cu{L}$ and assign more labels from it. We simply keep assigning labels from label values that have not reached their upper bound on the number of centers until all centers have a label. 

Now, we consider only the constraints on the number of points a label receives (\ref{lcul_2}). We simply follow the same reduction from theorems (\ref{hardness_a},\ref{hardness_b},\ref{hardness_c}), see also the beginning of this subsection for the details of the reduction from X3C. We have $t=q+r$ many subsets, we let the number of labels of the $\LCUL$ instance be $m=t=q+r$. Further, we partition the set of labels into two, i.e. $\labs=\labs_1 \cup \labs_2$ where $|\labs_1|=q$ and $|\labs_2|=r$, and we set the lower and upper bounds for the labels according to these sets. Specifically, $\forall L \in \labs_1: \lb{L}=\ub{L}=4q$ and $\forall L \in \labs_2: \lb{L}=\ub{L}=1$. Now, clearly a solution for X3C leads to a solution for the $\LCUL$ instance, we simply let the subsets (centers) in the solution of X3C be the centers for the label set $\labs_1$. Each center is assigned a label from $\labs_1$ and covers itself and 3 points from $\mathcal{U}$, this leads to $4q$ many points which clearly satisfies the upper and lower bounds. Further, the centers not the solution are assigned a label from $\labs_2$ and cover themselves, which is just 1 point and therefore satisfies the constraints. Now for the reverse direction, consider the set $\labs_2$ where we have $r$ many labels each covering 1 point. It clear, the smallest cost would be for a center to be assigned to itself, it follows that we are looking for $r$ many centers and that each center should only be assigned to itself. This then leaves us with $q$ many centers, since no center can cover more than $4q$ many points at a distance of 1, and since we have $q$ many labels with each having to cover $4q$ many points, we clearly have a set cover, i.e. a solution for X3C. 
\end{proof}

\twocolorhardness*

\begin{proof}
We follow a reduction for X3C (see the beginning of the appendix). We consider the two label case, $\labs=\{N,P\}$. Similiar to the previous reductions we will have $t$ many blue centers for the subsets $\mathcal{W}_1,\dots,\mathcal{W}_t$ each being connected to its elements in $\mathcal{U}$ at a distance of 1 with all elements in $\mathcal{U}$ being red. Note that $|\mathcal{U}|=q$ and that $t=q+r$. Now we also add $2q$ many blue centers which are not connected to anything by an edge, expect for one center which is connected by an edge to a new $3(r+2q)$ many red points, this means that any one of these red points is at a distance of 1 from this new center. Note that the increase in the problem size is still polynomial in the original X3C problem. We set the color proportionality constraint so that each label should have exactly 3:1 ratio of red points to blue points. Now the total number of points in the problem is $n=4q+r+2q+3(r+2q)=4(3q+r)$. The number of centers $k=q+r+2q=3q+r$. Further, we set $\alpha_P=\frac{q}{(3q+r)}$ and $\alpha_N=1-\alpha_P=\frac{2q+r}{3q+r}$. We set the lower and upper size bounds according to $\alpha_P$ and $\alpha_N$, this leads to $\lb{P}=\ub{P}=\alpha_P n=\frac{q}{(3q+r)} n=\frac{q}{(3q+r)}4(3q+r)=4q$ and $\lb{N}=\ub{N}=\frac{2q+r}{3q+r}n=\frac{2q+r}{3q+r}4(3q+r)=4(2q+r)$. Further, the number of centers for each label are $\cl{P}=\cu{P}=\alpha_P k=\frac{q}{(3q+r)}k=\frac{q}{(3q+r)}3q+r=q$ and $\cl{N}=\cu{N}=\alpha_N k=\frac{2q+r}{3q+r}3q+r=2q+r$. 

A solution for X3C leads to a solution for $\LCUL$ at cost $1$: Simply let the $q$ many centers representing the solution set in $\mathcal{W}_1,\dots,\mathcal{W}_t$ be the positive labeled centers and assign them the points that belong to them and let all other centers be negative and assign the last new center all of the $3(r+2q)$ many red children points. We then $q$ many positive centers covering $4q$ many points with the color proportionality being 3:1 red points to blue points. Similarly, for the negative set we have $2q+r$ many centers covering $4(2q+r)$ many points at a color proportionality of 3:1 red to blue. This is done at cost of 1, so clearly optimal.

A solution for $\LCUL$ at cost $1$ leads to a solution X3C: Suppe the new blue center with $3(r+2q)$ many red children is assigned a positive label, this to achieve an optimal cost all of its children have to be assigned to it. This means that the positive set would have at least $3(r+2q)=6q+3r$ many points, but $\lb{P}=\ub{P}=\alpha_P n = 4q < 6q < 6q+3r$ which causes a contradiction. Therefore that center can never be positive. Therefore, we are looking for $\alpha_P k =q$ many centers to cover $\alpha_P n= 4q$ many points and because of the color proportionality constraint $3q$ many of them are red and $q$ are blue. Finding this set at an optimal cost is a solution for X3C.    
\end{proof}

\section{$\LCAL$ Algorithm for Two Labels and General Proportions}\label{gen_alg}
Our algorithm for general proportions is similar to the exact preservation algorithm. The two main differences lie in the fact that we use feasibility checks for a given value of $n_P$ where $n_P$ is the number of points to be assigned to the positive label and that the way we move points from the negative-labelled centers to the positive-labelled labels is more elaborate. In particular algorithm block \ref{alg:alg_non_exact_k_med_means} shows our algorithm. Note that $n_{\pcolor}=|\Points^{\pcolor}|$. 

\begin{algorithm}[h!]
  \caption{Non-Exact Preservation for $k$-median/$k$-means}
  \label{alg:alg_non_exact_k_med_means}
\begin{algorithmic}[1]
  \STATE Define the optimal assignment as $\phi^{*}$ and its cost as $cost^{*}$.
  \STATE \textbf{Step 1:} 
  \STATE Find an assignment $\phi_0$ that assigns all points to their nearest center in $N$, this means that $|\phi^{-1}_{0}(N)|=n$ and $|\phi^{-1}_{0}(P)|=0$. Update $\phi^{*}$ and $cost^{*}$ according to the values for this solution $\phi_0$. 
  \STATE \textbf{Step 2:} 
  \FOR{$n_P=0$ to $n$ }  
      \STATE $\forall \pcolor \in \Colors$ find $n^P_{\pcolor,l}= \max \Big(\ceil{l^P_{\pcolor} n_P},n_{\pcolor}-\floor{u^N_{\pcolor}(n-n_P)} \Big)$
      \STATE $\forall \pcolor \in \Colors$ find $n^P_{\pcolor,u}= \min \Big(\floor{u^P_{\pcolor} n_P},n_{\pcolor}-\ceil{l^N_{\pcolor}(n-n_P)} \Big)$
      \STATE Find $n_{P,l}=\sum_{\pcolor \in \Colors} n^P_{\pcolor,l}$ and $n_{P,u}=\sum_{\pcolor \in \Colors} n^P_{\pcolor,u}$
      \IF{ ($\forall \pcolor \in \Colors: n^P_{\pcolor,u} \ge n^P_{\pcolor,l}$) \AND ($ n_{P,u} \ge n_P \ge n_{P,l}$)}
             \STATE $\forall \pcolor \in \Colors$ move as many points with the maximum drop such that there at least $n^P_{\pcolor,l}$ points of color $\pcolor$ in the positive set 
             \IF{$n_P > n_{P,l}$} 
                 \STATE Move $\big(n_P-n_{P,l}\big)$ many points to the positive set, each time selecting the point with the maximum drop provided the total number in the positive set of its color $\pcolor$ does not exceed $n^P_{\pcolor,u}$. 
                 \STATE For each point $\point$ moved to the positive set, record $N(j)$, $P(j)$, and the iteration of movement $t_j=i$. 
             \ENDIF
             \STATE Record the cost in entry $cost(i)=cost_i$. 
                 \IF{$cost_{i} < cost^{*}$}
                    \STATE Let $\phi^{*}=\phi_{i}$ and $cost^{*}=cost_i$ \COMMENT {Update $\phi^{*}$ and $cost^{*}$}
                \ENDIF
      \ELSE 
            \STATE Move to line 5. 
      \ENDIF
  \ENDFOR  
  \RETURN $\phi^{*}, cost^{*}$
\end{algorithmic}
\end{algorithm}

Notice how we calculate $n^P_{\pcolor,l}$ and $n^P_{\pcolor,u}$ in each iteration, these are the lower and upper bounds for the number of points of color $\pcolor$ that should be in the positive set for a given value of $n_P$. It is not difficult to see that a violation of these bounds would cause a problem in the proportion representation in either the positive of negative set and that $n_{P,u} \ge n_P \ge n_{P,l}$. This is proved in the next lemma:
\begin{lemma}\label{feasible_move}
If the value of $n_P$ satisfies: (1) $\forall \pcolor \in \Colors: n^P_{\pcolor,u} \ge n^P_{\pcolor,l}$. (2) $n_{P,u} \ge n_P \ge n_{P,l}$. 
Then $n_P$ is feasible.
\end{lemma}
\begin{proof}
For (1): if we have $\forall \pcolor \in \Colors: n^P_{\pcolor,u} \ge n^P_{\pcolor,l}$, then we can have $n_{P,h}$ points such that $n^P_{\pcolor,u} \ge n_{P,h} \ge n^P_{\pcolor,l}$. It follows by the values of $n^P_{\pcolor,l}$ and $n^P_{\pcolor,u}$ that $\forall \pcolor \in \Colors: \floor{u^P_{\pcolor} n_P}  \ge n_{P,h} \ge \ceil{l^P_{\pcolor}n_P}$ which means that the positive set is indeed feasible in terms of proportions. 

For the negative set, we would have $n_N=n-n_P$ many points in the negative set as well as $n_{N,h} = n_{\pcolor} - n_{P,h}$ many points of color $\pcolor$ in the negative set. It follows, that $n_{N,h} = n_{\pcolor} - n_{P,h} \ge n_{\pcolor} - \min \Big(\floor{u^P_{\pcolor} n_P},n_{\pcolor}-\ceil{l^N_{\pcolor}(n-n_P)} \Big) \ge n_{\pcolor} - \big( n_h -\ceil{l^N_{\pcolor}(n-n_P)} \big) \ge \ceil{l^N_{\pcolor}(n-n_P)} \ge \ceil{l^N_{\pcolor}n_N}$. By similar arguments we can show that $n_{N,h} \leq \floor{u^N_{\pcolor}n_N}$ which means that the color proportions are balanced for the negative set as well. 

For (2): It is immediate to see that this is feasible if $n_{P,u}=n_P= n_{P,l}$, if on the other hand we have an inequality on one side, then we can also have a total of $n_P$ by moving points for each color within its upper and lower bounds ($n^P_{\pcolor,l}$ and $n_{P,u}$) until we have a total of $n_P$ many points.
\end{proof}

\begin{theorem}
Algorithm (\ref{alg:alg_non_exact_k_med_means}) returns an optimal solution. 
\end{theorem}
\begin{proof}
It is clear that when all points are assigned to the negative label the solution is optimal for that value of $n_P$ ( although possibly not feasible). As we iterate through the values of $n_P$ until we find a feasible $n_P$ our method results in feasible solution as proved in lemma (\ref{feasible_move}) and it also leads to an optimal solution for that value of $n_P$. To prove the second statement follow a similar argument to the proof of theorem (\ref{th_exact_alg_correct}).  

Specifically, suppose that we are iteration $n_P$ and that the last time the solution was updated \footnote{Note that the solution at step $(n_P-1)$ may not be updated since the number of points $n_P$ assigned to the positive set of centers may not be feasible.} was at iteration $n_P'<n_P$. Assuming our solution at step $n_P'$ is optimal we wish to prove that the solution for step $n_P$ is also optimal. Let $\phi'$ and $\phi$ denote the assignments for steps $n_P'$ and $n_P$, respectively. As the assignment changes from $\phi'$ to $\phi$, we can put the points in 4 sets: $\Points_{N\rightarrow N},\Points_{P\rightarrow P},\Points_{N\rightarrow P},\Points_{P\rightarrow N}$. The first two sets remain assigned to the same label, whereas the last two change labels. Since the first two set of points do not change labels, they are assigned to the same centers as they were in $\phi'$ since that is their closest center in $N$ or $P$. Since at iteration $n_P$, we should have $n_P$ many points assigned to the positive set and at iteration $n_P'$ we had $n_P'$ many points assigned, then $|\Points_{N\rightarrow P}|=|\Points_{P\rightarrow N}|+(n_P-n_P')$. Further, let $n'_h$ denote the number of points of color $h$ assigned to the positive label by the assignment $\phi'$. 



Sort the set of points in $\Points_{N\rightarrow P}$ descendingly according to their drop value, take from each color $n^P_{h,l}-n'_h$ many points with the maximum drop. Further, if this set does not have a size of $n_P$, then choose more points from each color provided their upper bound has not be reached, each time picking the ones with the maximum drop, let $\Points^{*}_{N\rightarrow P}$ denote that resulting set of points, and $\Points^{*\pcolor}_{N\rightarrow P}$ the subset of $\Points^{*}_{N\rightarrow P}$ with color $\pcolor$. It follows that the change in the cost is: 
\begin{align}
    & \underbrace{ \sum_{\point \in \Points^{*\pcolor_1}_{N\rightarrow P}} \Big( N(j)-P(j) \Big) + \dots + \sum_{\point \in \Points^{*\pcolor_{\colSize}}_{N\rightarrow P}} \Big( N(j)-P(j) \Big)  }_{A} + \\ 
    & \underbrace{ \sum_{\point \in \Points_{N\rightarrow P}-\Points^{*}_{N\rightarrow P}} \Big( N(j)-P(j) \Big) + \sum_{\point \in \Points_{P\rightarrow N}} \Big( N(j)-P(j) \Big) }_{B}
\end{align}
if having $|\Points_{N\rightarrow P}|=(n_P-n_P')$ and $|\Points_{P\rightarrow N}|=0$ does not achieve the optimal solution and instead we need $|\Points_{P\rightarrow N}|=t>0$ and $|\Points_{N\rightarrow P}|=t+(n_P-n_P')$, then it must be the case that $B<0$ but that would imply that we can achieve a better solution by interchanging $|\Points_{P\rightarrow N}|$ many points\footnote{Note that $|\Points_{N\rightarrow P}-\Points^{*}_{N\rightarrow P}|=|\Points_{P\rightarrow N}|$} from the positive set to the negative, this implies that the assignment of $\phi'$ is not optimal which contradicts the inductive hypothesis.

Now since we find the optimal value for each feasible $n_P$, we indeed can find the optimal value by the finding the minimum of those values.
\end{proof}
We note that although the algorithm would find the optimal solution if it exists, the pre-set proportion bounds may lead to an infeasible problem. In that case, our algorithm would terminate without finding a solution. Further, it is not difficult to generalize the algorithm to the case of the $k$-center. 

\begin{theorem} \label{th_nonexact_alg_run_time}
For the two label case $\labs =\{N,P\}$ and $n$ many points. Using algorithm (\ref{alg:alg_non_exact_k_med_means}) we can obtain the optimal cost and solution for all possible distribution of points among the positive $P$ and negative $N$ labels in $O(n(\log{n}+k))$ and the memory required to save the costs and solutions is $O(n\log{n})$.
\end{theorem}
\begin{proof}
Follows the same argument as that of theorem (\ref{th_exact_alg_correct}). There are clearly $n$ many possible solution values and each point may be assigned to $k\leq n$ many possible centers so we need at most $O(n\log{n})$ memory. 
\end{proof}


\end{document}